\documentclass{article}

\usepackage{microtype}
\usepackage{graphicx}
\usepackage{subcaption}
\usepackage{booktabs} 

\usepackage{hyperref}

\usepackage{bm}
\usepackage{wrapfig}
\usepackage{threeparttable}
\usepackage{multirow}

\usepackage[preprint]{icml2026}

\usepackage{amsmath}
\usepackage{amssymb}
\usepackage{mathtools}
\usepackage{amsthm}

\usepackage[capitalize,noabbrev]{cleveref}

\theoremstyle{plain}
\newtheorem{theorem}{Theorem}[section]
\newtheorem{proposition}[theorem]{Proposition}

\theoremstyle{definition}

\theoremstyle{remark}
\newtheorem{remark}[theorem]{Remark}

\usepackage[textsize=tiny]{todonotes}

\icmltitlerunning{StretchTime}

\begin{document}

\twocolumn[
  \icmltitle{StretchTime: Adaptive Time Series Forecasting via Symplectic Attention
  }



  \icmlsetsymbol{equal}{*}

  \begin{icmlauthorlist}
    \icmlauthor{Yubin Kim}{yyy}
    \icmlauthor{Viresh Pati}{yyy}
    \icmlauthor{Jevon Twitty}{yyy}
    \icmlauthor{Vinh Pham}{yyy}

    \icmlauthor{Shihao Yang}{yyy}
    \icmlauthor{Jiecheng Lu}{yyy}
  \end{icmlauthorlist}

  \icmlaffiliation{yyy}{Georgia Institute of Technology, Atlanta, GA, USA}

  \icmlcorrespondingauthor{Shihao Yang}{shihao.yang@isye.gatech.edu}
  \icmlcorrespondingauthor{Jiecheng Lu}{jliu414@gatech.edu}

  \icmlkeywords{Machine Learning, ICML}

  \vskip 0.3in
]



\printAffiliationsAndNotice{}  

\begin{abstract}
Transformer architectures have established strong baselines in time series forecasting, yet they typically rely on positional encodings that assume uniform, index-based temporal progression. However, real-world systems, from shifting financial cycles to elastic biological rhythms, frequently exhibit ``time-warped'' dynamics where the effective flow of time decouples from the sampling index. In this work, we first formalize this misalignment and prove that rotary position embedding (RoPE) is mathematically incapable of representing non-affine temporal warping. To address this, we propose Symplectic Positional Embeddings (SyPE), a learnable encoding framework derived from Hamiltonian mechanics. SyPE strictly generalizes RoPE by extending the rotation group $\mathrm{SO}(2)$ to the symplectic group $\mathrm{Sp}(2,\mathbb{R})$, modulated by a novel input-dependent adaptive warp module. By allowing the attention mechanism to adaptively dilate or contract temporal coordinates end-to-end, our approach captures locally varying periodicities without requiring pre-defined warping functions. We implement this mechanism in StretchTime, a multivariate forecasting architecture that achieves state-of-the-art performance on standard benchmarks, demonstrating superior robustness on datasets exhibiting non-stationary temporal dynamics.
\end{abstract}

\section{Introduction}

Time series forecasting is fundamental across domains including finance, healthcare, climate science, and industrial monitoring. Transformer architectures have demonstrated remarkable success in modeling sequential data~\citep{vaswani2017attention}, and recent adaptations for time series \citep{zhou2021informer, wu2021autoformer, nie2023patchtst, liu2024itransformer} 
have established strong baselines on standard benchmarks. However, a critical question remains underexplored: how should temporal structure be encoded to capture the time-warped temporal dynamics 
inherent in real-world sequences?

Positional encodings are central to the Transformer's ability to model sequential dependencies. The original sinusoidal encodings~\citep{vaswani2017attention} and their successors, Rotary Position Embeddings (RoPE)~\citep{su2021roformer} and ALiBi~\citep{press2022alibi}, encode position through fixed functions of token indices. While effective for natural language, these methods assume uniform, index-based temporal spacing, an assumption that fails when the underlying dynamics exhibit \emph{temporal stretching} (Figure~\ref{fig:output2}): time-varying periodicity arising from sensor drift, evolving seasonal patterns, or irregular sampling.


\begin{figure}[ht]
    \centering
    \includegraphics[width=0.85\linewidth]{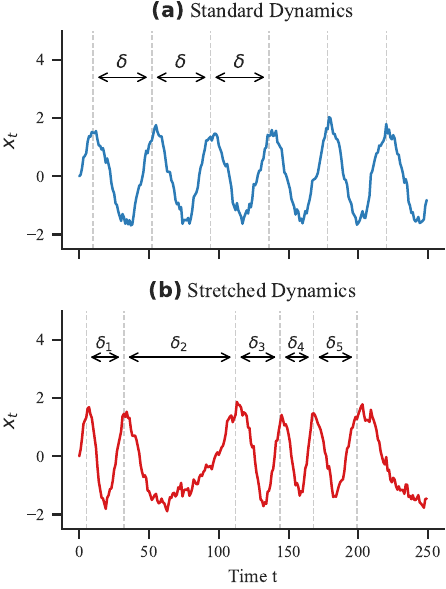}
    \caption{Visualization of Temporal Stretching}
    \label{fig:output2}
\end{figure}

Consider a seasonal autoregressive process whose period $P(t)$ varies over time. Such non-stationary periodicity is ubiquitous in real-world systems: in macroeconomics, financial cycles exhibit state-dependent durations that decouple from standard business cycles~\citep{borio2014financial}, while in ecology, climate change drives non-uniform shifts in phenological events across trophic levels~\citep{thackeray2016phenological}. Classical SARIMA models~\citep{box2015time} and their periodic extensions~\citep{franses2004periodic} assume fixed seasonal lags, while dynamic time warping~\citep{sakoe1978dynamic} can align sequences post-hoc but does not integrate with end-to-end learning. In self-attention, the attention logit is dominated by the content-dependent query--key interaction $\langle \mathbf{q}(\mathbf{x}_m), \mathbf{k}(\mathbf{x}_n)\rangle$, and RoPE effectively assumes that this content geometry is stable across time so that a fixed, index-based phase modulation suffices. Fixed-frequency positional encodings face a fundamental limitation: they cannot adapt their attention patterns to match locally varying periodicity, forcing the model to approximate time-varying structure with position-independent frequencies.

We propose \textbf{Symplectic Positional Embeddings (SyPE)}, a learnable temporal encoding that strictly generalizes RoPE by extending the rotation group $\mathrm{SO}(2)$ to the symplectic group $\mathrm{Sp}(2,\mathbb{R})$. This extension preserves RoPE's beneficial properties--relative position encoding, length generalization, and computational efficiency--while introducing learnable parameters that enable adaptive temporal transformations including anisotropic scaling and frequency modulation. Concretely, we introduce (i) an \emph{adaptive warp module} that maps content to input-dependent temporal coordinates and (ii) a \emph{learnable rotation kernel} whose symplectic parameters modulate the effective phase and geometry of the positional transformation. We integrate these components into StretchTime, a novel multivariate forecasting architecture designed to learn these warped dynamics end-to-end alongside the forecasting task.

Our contributions are the following:
\begin{enumerate}
    \item We formalize the temporal stretching problem in time series forecasting and demonstrate the fundamental limitation of fixed-frequency positional encodings for sequences with time-varying periodicity.
    \item We derive SyPE from first principles using Hamiltonian mechanics, proving that it strictly generalizes RoPE while maintaining symplectic structure and relative position encoding properties.
    \item We implement StretchTime, a Transformer-based forecasting model utilizing SyPE, and demonstrate state-of-the-art performance on standard benchmarks, showing particular improvements on datasets exhibiting non-uniform temporal dynamics.
\end{enumerate}

\section{Related Work}

\subsection{Transformers for Time Series}

The application of Transformers to time series forecasting has evolved rapidly~\citep{wen2023transformers}. Early work like Informer~\citep{zhou2021informer} and Autoformer~\citep{wu2021autoformer} addressed complexity via sparse attention and decomposition. Later, PatchTST~\citep{nie2023patchtst} utilized patching for efficiency, while iTransformer~\citep{liu2024itransformer} inverted the framework to treat variates as tokens.

However, the necessity of such complexity is debated. \citet{zeng2023dlinear} demonstrated that simple linear models (DLinear) often match Transformers~\citep{liu2024itransformer}, and TimesNet~\citep{wu2023timesnet} achieved state-of-the-art results using 2D convolutions. These findings suggest that capturing temporal structure effectively is more critical than architectural depth, motivating our focus on learnable temporal encodings.

\subsection{Positional Encodings}

The original Transformer~\citep{vaswani2017attention} utilized sinusoidal encodings with fixed frequencies to represent absolute positions. While foundational, this approach often fails to generalize when models are evaluated on sequence lengths exceeding those seen during training~\citep{press2022alibi}. To address this, attention shifted toward relative position encodings, which define position based on the distance between tokens rather than their absolute coordinates.

Approaches like T5~\citep{raffel2020t5} introduce learned scalars to bias attention based on token distance, whereas ALiBi~\citep{press2022alibi} subtracts a static linear penalty, allowing for extrapolation without learned parameters. However, both methods assume that positional relationships are independent of the input content. Recent work suggests that no single encoding dominates across all tasks~\citep{kazemnejad2023impact}, implying that optimal temporal structure is task-dependent—motivating our learnable approach.

\textbf{Rotary Position Embeddings (RoPE)}~\citep{su2021roformer} represent the current standard in Large Language Models. RoPE encodes relative position by rotating query and key vectors in paired dimensions, providing multi-scale resolution via geometric progression. While effective in other domains~\citep{heo2024ropevit,chen2023extending}, RoPE relies on fixed, pre-computed frequencies. This rigidity prevents the model from adapting to the locally varying periodicities inherent in time series—a limitation we formally prove in Theorem \ref{thm:impossibility}. We generalize RoPE via symplectic transformations, enabling the model to adaptively encode task-specific periodicities.

\subsection{Time-Varying Periodicity in Real-World Systems} Strict periodicity is rare; instead, systems exhibit ``elastic" dynamics. In biomedical monitoring, ECG signals display continuous warping of heartbeat intervals~\citep{mccraty2015heart}, necessitating non-linear alignment. This elasticity extends to neuroscience~\citep{wiafe2024dynamics} and psychiatry, where symptom trajectories unfold at heterogeneous speeds that obscure dynamics under fixed-time assumptions~\citep{kopland2025dynamic}.

This phenomenon extends to macro-systems. In economics, \citet{borio2014financial} establishes that financial cycles operate on ``elastic" timelines (16–20 years) that decouple from the standard business cycle. Likewise, in ecology, climate change drives differential temporal shifts across trophic levels, causing systemic desynchronization~\citep{thackeray2016phenological}. These findings underscore that temporal warping is heterogeneous and motivate a learnable, adaptive approach to temporal encoding.

\subsection{Time-Varying Periodicity in Classical Statistics}

Modeling time-varying periodicity has a long statistical history. Periodic autoregressive (PAR) models~\citep{franses2004periodic,basawa2001large} allow coefficients to vary seasonally, but assume the underlying period is fixed.

For truly variable pacing, Dynamic Time Warping (DTW)~\citep{sakoe1978dynamic,berndt1994dtw} provides non-parametric alignment. DTW finds an optimal path to stretch or compress time, matching patterns across sequences. While effective for post-hoc analysis~\citep{keogh2005exact}, it lacks gradient-based integration and requires full sequences at inference.

State-space approaches like TBATS~\citep{delivera2011forecasting} capture evolving dynamics via time-varying coefficients, but rely on rigid parametric forms~\citep{koopman2006periodic}. Our adaptive warp module acts as a differentiable neural analogue to time warping. Unlike post-hoc alignment, it learns to adaptively dilate or contract temporal coordinates end-to-end, without requiring fixed warping function specifications.

\section{Methodology}
\label{sec:method}

\subsection{Task, Notation, and Architectural Overview}
We consider the multivariate time series forecasting problem. Let the observed context be $\mathbf{X} \in \mathbb{R}^{L \times C}$, where $L$ is the lookback window length and $C$ is the number of input channels. Given a forecast horizon $T$, the goal is to predict the future values $\mathbf{Y} \in \mathbb{R}^{T \times C}$. For simplicity in notation, we omit the batch dimension $B$ unless necessary.

Our model operates on a concatenated sequence of length $N = L + T$. For each target channel, we construct a token sequence $\mathbf{H} = [\mathbf{h}_1, \dots, \mathbf{h}_N]^\top \in \mathbb{R}^{N \times d}$ using the channel-value mixing strategy described in Section~\ref{sec:mixing_reg}, augmented with standard learnable absolute positional embeddings. These tokens are processed by a Transformer architecture where the self-attention mechanism is modulated by our proposed \textbf{Symplectic Positional Embedding (SyPE)}, which rotates queries and keys based on an \textbf{adaptive warp module} $\widehat{\tau}$.

\vspace{0.25em}
\noindent\textbf{Proof Strategy.} All formal statements and derivations are presented for a single self-attention head within a single Transformer layer, with dimension $d_h$. Since multi-head attention operates by concatenating independent head outputs and deep Transformers are functional compositions of these layers, the properties derived here generalize structurally to the full architecture.

\subsection{Data Structure: Temporally Warped Seasonal Dynamics}
\label{sec:data_structure}
Real-world systems often exhibit \emph{time warping}, where periodic structures persist but the ``speed'' of physical time flows non-uniformly relative to the sampling index. We formalize this via a monotone warping function $\tau: \{1, \dots, N\} \to \mathbb{R}_+$. Consider a warped seasonal AR(1) process for a scalar sequence $x_1, \dots, x_N$:
\begin{equation}
x_t = \phi x_{t-1} + A \sin\left( \omega_0 \tau(t) \right) + \epsilon_t,
\label{eq:warped_ar1}
\end{equation}
where $\epsilon_t \sim \mathcal{N}(0, \sigma^2)$. In this formulation, correlations do not peak at a fixed index lag $\delta = |m-n|$, but rather when the \emph{warped time difference} $\Delta \tau_{m,n} = |\tau(m) - \tau(n)|$ aligns with the underlying period $2\pi k / \omega_0$.

\subsection{Single-Layer Self-Attention with Position Modulation}
\label{sec:attn}
We define the attention computation on the sequence of hidden representations $\mathbf{H} \in \mathbb{R}^{N \times d}$. For a single head, let $\mathbf{W}_Q, \mathbf{W}_K, \mathbf{W}_V \in \mathbb{R}^{d_h \times d}$ be the projection matrices. The query, key, and value vectors at step $t$ are:
\begin{equation}
\mathbf{q}_t = \mathbf{W}_Q \mathbf{h}_t, \quad \mathbf{k}_t = \mathbf{W}_K \mathbf{h}_t, \quad \mathbf{v}_t = \mathbf{W}_V \mathbf{h}_t.
\end{equation}
Let $t \mapsto \widehat{\tau}_t$ be an adaptive warp module function. We apply a Rotational Position Modulation via a block-diagonal matrix $\mathbf{R}(\cdot)$. This formulation adopts the structural design of Rotary Positional Embeddings (RoPE) \citep{su2021roformer}; specifically, if $\mathbf{R}$ is restricted to standard fixed-frequency rotations and the time step is static ($\widehat{\tau}_t = t$), our formulation recovers standard RoPE exactly. In the general case, the modulated queries and keys are:
\begin{equation}
\tilde{\mathbf{q}}_m = \mathbf{R}(\widehat{\tau}_m) \mathbf{q}_m, \quad \tilde{\mathbf{k}}_n = \mathbf{R}(\widehat{\tau}_n) \mathbf{k}_n.
\end{equation}
The resulting attention score $s_{m,n}$ exploits the orthogonality of $\mathbf{R}$ to depend only on the relative warped time displacement:
\begin{equation}
s_{m,n} = \frac{(\tilde{\mathbf{q}}_m)^\top \tilde{\mathbf{k}}_n}{\sqrt{d_h}} = \frac{\mathbf{q}_m^\top \mathbf{R}(\widehat{\tau}_n - \widehat{\tau}_m) \mathbf{k}_n}{\sqrt{d_h}}.
\label{eq:attn_scores}
\end{equation}
The attention weights $a_{m,n}$ and the head output $\mathbf{o}_m$ follow standard definitions: $a_{m,n} = \text{softmax}(s_{m,n})$ and $\mathbf{o}_m = \sum_{n=1}^{N} a_{m,n} \mathbf{v}_n$.

\subsection{Impossibility of Standard RoPE}
\label{sec:impossibility}
We demonstrate that standard Rotary Positional Embeddings (RoPE) are mathematically incapable of representing warped dynamics when the underlying time warping function $\tau$ is non-affine. This limitation arises because RoPE enforces a stationary angular velocity (constant frequency), whereas non-affine warping implies a time-varying instantaneous frequency. This motivates the need for our adaptive warp module $\widehat{\tau}$ introduced in Section~\ref{sec:sype}.

\begin{theorem}[Impossibility of RoPE for Non-Affine Warping]
\label{thm:impossibility}
Let $\tau: \{1, \dots, N\} \to \mathbb{R}_+$ be a non-affine function. Assume the non-aliasing condition $|\omega_0(\tau(t+1) - \tau(t))| < \pi$ for all $t$. Then there exists no $\theta \in \mathbb{R}$ satisfying the RoPE relative position property:
\[
\theta(m-n) \equiv \omega_0(\tau(m) - \tau(n)) \pmod{2\pi} \quad \forall m, n.
\]
\end{theorem}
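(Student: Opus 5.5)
We argue by contradiction and reduce the congruence to consecutive indices, where the non-aliasing condition turns a statement modulo $2\pi$ into an exact equality. Throughout we assume $\omega_0 \neq 0$. This is implicit in the setting of Section~\ref{sec:data_structure}: if $\omega_0 = 0$, then $\theta = 0$ satisfies the property trivially for every $\tau$. Suppose, towards a contradiction, that some $\theta \in \mathbb{R}$ satisfies $\theta(m-n) \equiv \omega_0(\tau(m)-\tau(n)) \pmod{2\pi}$ for all $m,n$.

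\textbf{Step 1 (restrict to adjacent indices).} Put $m = t+1$ and $n = t$ for $t = 1,\dots,N-1$. Writing $\Delta_t := \omega_0(\tau(t+1)-\tau(t))$, this gives $\theta \equiv \Delta_t \pmod{2\pi}$ for every $t$.

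\textbf{Step 2 (lift the congruence to an equality).} Let $\theta' \in (-\pi,\pi]$ be the representative of $\theta$ modulo $2\pi$. By the non-aliasing hypothesis, $\Delta_t \in (-\pi,\pi)$. Two reals in $(-\pi,\pi]$ that are congruent modulo $2\pi$ differ by a multiple of $2\pi$ of absolute value less than $2\pi$, so they are equal. The only boundary case is $\theta' = \pi$: then $\Delta_t \equiv \pi$ would force $|\Delta_t| = \pi$, which is excluded. Hence $\Delta_t = \theta'$ for all $t$. The increments of $\tau$ are therefore constant: $\tau(t+1)-\tau(t) = \theta'/\omega_0$.

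\textbf{Step 3 (conclude affinity).} Telescoping gives $\tau(t) = \tau(1) + (t-1)\,\theta'/\omega_0$ for all $t \in \{1,\dots,N\}$. So $\tau$ is affine in $t$, contradicting the hypothesis. Hence no such $\theta$ exists.

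The only delicate point is Step 2. This is exactly where the strict non-aliasing inequality is used: without it, the increments could differ by multiples of $2\pi/\omega_0$ and still satisfy the congruence, and the argument would fail. The remaining care is to handle the representative $\theta' = \pi$ and to state the $\omega_0 \neq 0$ caveat explicitly. Note also that the full family of constraints over all pairs $(m,n)$ is not needed; the adjacent pairs already force the contradiction.
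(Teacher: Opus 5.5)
Your proof is correct and follows the paper's argument step for step. Both restrict to adjacent pairs $(t+1,t)$, use the strict non-aliasing bound to force $\omega_0\Delta\tau(t)$ to equal the representative $\theta\in(-\pi,\pi]$ exactly, and conclude that constant increments make $\tau$ affine; the paper forces this equality by applying the reverse triangle inequality to $\theta+2\pi k_t$, while you note that two reals in $(-\pi,\pi]$ congruent mod $2\pi$ must coincide. Your explicit $\omega_0\neq 0$ caveat is a genuine improvement: the paper silently divides by $\omega_0$, and the statement is in fact false when $\omega_0=0$, since then $\theta=0$ works for every $\tau$.
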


\begin{proof}
Assume such $\theta$ exists with representative $\theta \in (-\pi, \pi]$. Let $\Delta\tau(t) = \tau(t+1) - \tau(t)$.
Setting $(m, n) = (t+1, t)$ implies $\theta \equiv \omega_0 \Delta\tau(t) \pmod{2\pi}$. Thus, $\omega_0 \Delta\tau(t) = \theta + 2\pi k_t$ for some $k_t \in \mathbb{Z}$.
By the non-aliasing condition, $|\omega_0 \Delta\tau(t)| < \pi$. However, if $k_t \neq 0$, the reverse triangle inequality yields:
\[
|\omega_0 \Delta\tau(t)| = |\theta + 2\pi k_t| \ge 2\pi|k_t| - |\theta| \ge 2\pi - \pi = \pi,
\]
a contradiction. Hence $k_t = 0$ for all $t$, implying $\Delta\tau(t) = \theta/\omega_0$ is constant. A function with constant increments is affine, contradicting the hypothesis.
\end{proof}

For further analysis on other positional embedding options like additive position encoding, refer to appendix \ref{app:other_pe_limitations}

\subsection{Method: Symplectic Positional Embeddings (SyPE)}
\label{sec:sype}
To resolve the limitation in Theorem~\ref{thm:impossibility}, we introduce SyPE, which replaces static RoPE rotations with a dynamic symplectic flow $\mathbf{S}(\Delta \widehat{\tau})$ driven by an adaptive warp module.

\paragraph{Symplectic Flow Formulation.}
We define the position encoding in the symplectic group $\mathrm{Sp}(2, \mathbb{R})$. For each frequency band, we parameterize a symmetric Hamiltonian matrix $\mathbf{K} = \begin{pmatrix} a & c \\ c & b \end{pmatrix}$ with $a,b > 0$ and $ab-c^2 > 0$. This generates a continuous flow $\mathbf{S}(t) = \exp(t \mathbf{J}\mathbf{K})$, where $\mathbf{J} = \begin{pmatrix} 0 & 1 \\ -1 & 0 \end{pmatrix}$ is the standard symplectic matrix.

\paragraph{Structured Generalization of RoPE.}
SyPE provides a structured Hamiltonian parameterization of rotational modulation that includes RoPE as a special case (detailed derivation provided in the Appendix). While this formulation does not introduce an absolute-time gating mechanism by itself, the symplectic geometric prior offers a flexible inductive bias that may empirically improve training dynamics and conditioning by allowing the model to adaptively warp temporal distances beyond the constraints of rigid RoPE rotations.

\paragraph{Adaptive Warp Module.}
\label{sec:learnable-clock}
The warped time $\widehat{\tau}$ is computed dynamically from the input $\mathbf{X}$. Let $\mathbf{h}_t$ be the representation at step $t$. We compute local time increments:
\begin{equation}
\Delta \widehat{\tau}_t = \text{Softplus}(\mathbf{w}_{\tau}^\top \mathbf{h}_t), \quad \widehat{\tau}_m = \sum_{i=1}^m \Delta \widehat{\tau}_i.
\end{equation}

\paragraph{SyPE Attention Mechanism.}
We apply the symplectic flow to the query and its conjugate to the key. For sub-vectors $\mathbf{q}, \mathbf{k} \in \mathbb{R}^2$:
\begin{equation}
\tilde{\mathbf{q}}_m = \mathbf{S}(\widehat{\tau}_m) \mathbf{q}_m, \quad \tilde{\mathbf{k}}_n = \mathbf{J} \mathbf{S}(\widehat{\tau}_n) \mathbf{k}_n.
\end{equation}

\begin{theorem}[SyPE Representations of Warped Time]
\label{thm:sype_rep}
The SyPE attention score depends exclusively on the warped time difference:
\begin{equation}
\langle \tilde{\mathbf{q}}_m, \tilde{\mathbf{k}}_n \rangle = \mathbf{q}_m^\top \mathbf{J} \mathbf{S}(\widehat{\tau}_n - \widehat{\tau}_m) \mathbf{k}_n.
\end{equation}
\end{theorem}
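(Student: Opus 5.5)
The plan is to expand the inner product and move the query's flow across $\mathbf{J}$ using the symplectic identity; the one-parameter group property then merges the two flows. Writing
\[
\langle \tilde{\mathbf{q}}_m, \tilde{\mathbf{k}}_n \rangle = \mathbf{q}_m^\top \mathbf{S}(\widehat{\tau}_m)^\top \mathbf{J}\, \mathbf{S}(\widehat{\tau}_n)\, \mathbf{k}_n ,
\]
everything reduces to showing
\[
\mathbf{S}(\widehat{\tau}_m)^\top \mathbf{J} = \mathbf{J}\,\mathbf{S}(-\widehat{\tau}_m).
\]
Given this, the product becomes $\mathbf{J}\,\mathbf{S}(-\widehat{\tau}_m)\mathbf{S}(\widehat{\tau}_n) = \mathbf{J}\,\mathbf{S}(\widehat{\tau}_n-\widehat{\tau}_m)$, because $\mathbf{S}(t)=\exp(t\mathbf{J}\mathbf{K})$ satisfies $\mathbf{S}(s)\mathbf{S}(t)=\mathbf{S}(s+t)$ and $\mathbf{S}(-t)=\mathbf{S}(t)^{-1}$. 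These hold since all matrices $t\mathbf{J}\mathbf{K}$ commute with each other.

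\emph{Step 1: symplecticity.} First I will prove $\mathbf{S}(t)^\top \mathbf{J}\,\mathbf{S}(t)=\mathbf{J}$ for all $t$. Set $F(t)=\mathbf{S}(t)^\top\mathbf{J}\mathbf{S}(t)$, so that $F(0)=\mathbf{J}$. Differentiating gives
\[
F'(t)=\mathbf{S}^\top\big((\mathbf{J}\mathbf{K})^\top\mathbf{J}+\mathbf{J}\mathbf{J}\mathbf{K}\big)\mathbf{S}=\mathbf{S}^\top\big(\mathbf{K}\mathbf{J}^\top\mathbf{J}-\mathbf{K}\big)\mathbf{S}=0.
\]
This uses only that $\mathbf{K}^\top=\mathbf{K}$, $\mathbf{J}^\top\mathbf{J}=\mathbf{I}$ and $\mathbf{J}^2=-\mathbf{I}$. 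Hence $F\equiv\mathbf{J}$. Multiplying on the right by $\mathbf{S}(t)^{-1}=\mathbf{S}(-t)$ gives the identity above. (In dimension $2$ one could alternatively use $\det\mathbf{S}(t)=e^{t\,\mathrm{tr}(\mathbf{J}\mathbf{K})}=1$, since $\mathrm{tr}(\mathbf{J}\mathbf{K})=c-c=0$, together with the fact that $M^\top\mathbf{J}M=(\det M)\mathbf{J}$ for any $2\times2$ matrix $M$.)

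\emph{Step 2: assembly.} I substitute the identity into the expanded inner product and apply the group law, which yields exactly $\mathbf{q}_m^\top\mathbf{J}\,\mathbf{S}(\widehat{\tau}_n-\widehat{\tau}_m)\mathbf{k}_n$. For the full head dimension $d_h$, $\mathbf{S}$ and $\mathbf{J}$ act block-diagonally with one $2\times2$ block per frequency band. The inner product splits into a sum over bands, so the per-band result extends directly.

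The only substantive step is Step 1. I note that it needs only the symmetry of $\mathbf{K}$; the positivity conditions $a,b>0$ and $ab-c^2>0$ play no role here (they make the flow bounded and oscillatory). The rest is bookkeeping with transposes, where the point to check is that $\mathbf{J}$ ends up on the left of $\mathbf{S}$ in the final expression.
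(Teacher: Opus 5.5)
Your proof is correct and follows the paper's own argument. The paper likewise rewrites $\mathbf{S}(\widehat{\tau}_m)^\top\mathbf{J}$ as $\bigl[\mathbf{S}(\widehat{\tau}_m)^\top\mathbf{J}\mathbf{S}(\widehat{\tau}_m)\bigr]\mathbf{S}(-\widehat{\tau}_m)$, which is your identity $\mathbf{S}(t)^\top\mathbf{J}=\mathbf{J}\mathbf{S}(-t)$, and then merges the two flows with the group law; it proves symplecticity in the appendix by the same computation $\frac{d}{dt}\bigl(\mathbf{S}^\top\mathbf{J}\mathbf{S}\bigr)=0$, while your $2\times 2$ determinant shortcut and your observation that only $\mathbf{K}^\top=\mathbf{K}$ is needed are valid additions.
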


\begin{proof}
Using the property $\mathbf{S}(t)^\top \mathbf{J} \mathbf{S}(t) = \mathbf{J}$ and the group property $\mathbf{S}(t)\mathbf{S}(u) = \mathbf{S}(t+u)$:
\begin{align}
\langle \tilde{\mathbf{q}}_m, \tilde{\mathbf{k}}_n \rangle &= ( \mathbf{S}(\widehat{\tau}_m) \mathbf{q}_m )^\top \mathbf{J} \mathbf{S}(\widehat{\tau}_n) \mathbf{k}_n \\
&= \mathbf{q}_m^\top \mathbf{S}(\widehat{\tau}_m)^\top \mathbf{J} \mathbf{S}(\widehat{\tau}_n) \mathbf{k}_n \\
&= \mathbf{q}_m^\top \left[ \mathbf{S}(\widehat{\tau}_m)^\top \mathbf{J} \mathbf{S}(\widehat{\tau}_m) \right] \mathbf{S}(-\widehat{\tau}_m) \mathbf{S}(\widehat{\tau}_n) \mathbf{k}_n \\
&= \mathbf{q}_m^\top \mathbf{J} \mathbf{S}(\widehat{\tau}_n - \widehat{\tau}_m) \mathbf{k}_n.
\end{align}
\end{proof}

\subsection{StretchTime: Multivariate Time Series Forecasting with SyPE}
\label{sec:model_structures}

\begin{figure}[ht]
    \centering
    \includegraphics[width=\linewidth]{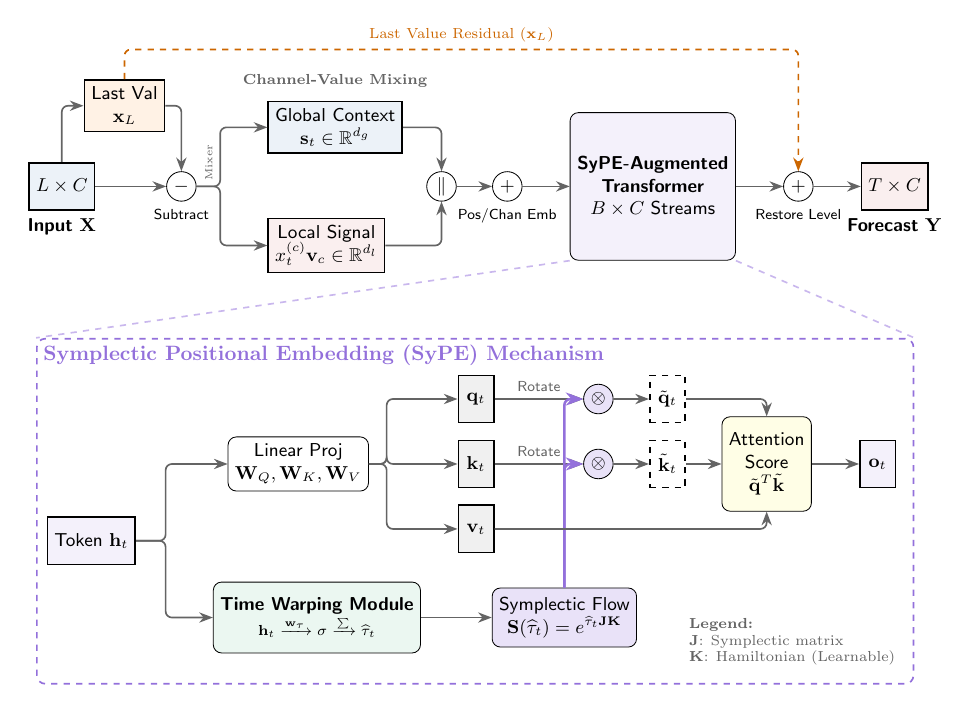}
    \caption{Overview of the SyPE-Augmented Transformer architecture.}
    \label{fig:architecture}
\end{figure}

\paragraph{Last Value Residual Learning.}
To mitigate the impact of non-stationarity and distribution shifts, we employ a residual learning strategy anchored to the last observed value. Unlike statistical normalization methods (e.g., RevIN or Z-score) that normalize based on window statistics, we strictly center the input sequence $\mathbf{X}$ relative to the final time step of the lookback window, denoted as $\mathbf{x}_L$. The model processes the relative difference sequence $\mathbf{X}_{\text{diff}} = \mathbf{X} - \mathbf{x}_L$, effectively learning to forecast the temporal increments rather than absolute magnitudes. The final prediction $\widehat{\mathbf{Y}}$ is reconstructed via a global residual connection that restores the reference value:
\begin{equation}
\widehat{\mathbf{Y}} = \mathcal{M}(\mathbf{X}_{\text{diff}}) + \mathbf{x}_L,
\end{equation}
where $\mathcal{M}$ denotes the forecasting model. This approach preserves the global level of the series while allowing the network to focus on modeling local temporal dynamics.

\paragraph{Channel--Value Mixed Tokenization.}
\label{sec:mixing_reg}

Our tokenization strategy is designed to capture both global multivariate context and local univariate specifics. For a target channel $c$ a-t time step $t$, we construct a composite token $\mathbf{z}_t^{(c)}$ via concatenation:
\begin{equation}
\mathbf{z}_t^{(c)} = [\mathbf{s}_t \,;\, x_t^{(c)} \mathbf{v}_c] + \mathbf{p}_t + \mathbf{e}_c,
\end{equation}
where:
\begin{itemize}
    \item $\mathbf{s}_t \in \mathbb{R}^{d_{\text{global}}}$ is a \emph{global context vector} derived from a linear projection of all input channels at step $t$.
    \item $x_t^{(c)} \mathbf{v}_c \in \mathbb{R}^{d_{\text{local}}}$ represents the \emph{local signal}, computed by projecting the scalar value $x_t^{(c)}$ using a channel-specific basis $\mathbf{v}_c$.
    \item $\mathbf{p}_t$ is a shared learnable absolute positional embedding, and $\mathbf{e}_c$ is a learnable channel identifier.
\end{itemize}
This design allows the model to attend to global correlations ($\mathbf{s}_t$) while preserving the distinct identity and magnitude of individual series.

\paragraph{Random Ratio Channel Dropout.}
Following the practice in recent time series forecasting advancements~\citep{lu2024context}, we implement a random ratio channel dropout strategy to enhance model generalization. During training, we randomly zero out a subset of multivariate channels using a sample-specific keep ratio. The surviving channels are rescaled by the inverse of the keep ratio to preserve the aggregate signal magnitude. This mitigates channel-specific noise and encourages robust, channel-invariant representations.

\section{Experiments}
\subsection{Synthetic Tasks: Recovering Warped Dynamics}
\label{sec:exp_synthetic}

To strictly validate the theoretical motivation behind StretchTime, we evaluate the model's ability to recover signal dynamics under controlled temporal non-stationarity, isolating the positional embedding's contribution from confounding real-world noise.

\textbf{Data Generation Process.} We generate synthetic datasets using the Temporally Warped Seasonal AR(1) process defined in Section~\ref{sec:data_structure} (Eq.~\ref{eq:warped_ar1}). The underlying time warping function $\tau(t)$ is instantiated as a non-affine oscillating flow, $\tau(t) = \sum_{i=0}^t (1 + A \sin(2\pi i / P))$, simulating a system undergoing periodic acceleration and deceleration. To mirror the scale and dimensionality of standard long-term forecasting benchmarks, we align the dataset configuration with ETTh1, generating a multivariate time series ($C=7$) of length $N=17,420$ with hourly granularity. We use a lookback window of $L=96$ and evaluate on forecasting horizons $T \in \{96, 192, 336, 720\}$. 

\textbf{Model Configuration and Baselines.} We compare StretchTime against a standard Transformer equipped with traditional RoPE, which we will denote as RoPE. As derived in Theorem~\ref{thm:impossibility}, standard RoPE enforces a stationary angular velocity and is theoretically ill-suited for the accelerating dynamics of our warped dataset. We also include an ablation without MLPs, removing the channel-mixing components, to isolate the contribution of the symplectic geometry driven by the adaptive warp module, which we will denote as w/o MLP. To strictly align the empirical setup with our theoretical derivations assuming a single self-attention layer, we configure all models with a depth of $N=1$. For more hyperparameter and implementation details, see \ref{sec:impl-details}. 

\begin{table}[tb]
\caption{Results on the synthetic dataset with temporally warped seasonal dynamics (see Section~\ref{sec:data_structure}). We report MSE and MAE for varying prediction lengths ($H$). Best results are in \textbf{bold} and second best are \underline{underlined}.}
\label{tab:synthetic_results}
\centering
\begin{small}
\setlength{\tabcolsep}{3.5pt}
\resizebox{\columnwidth}{!}{
\begin{tabular}{l|cc|cc|cc}
\toprule
\multirow{2}{*}{Horizon} & \multicolumn{2}{c|}{StretchTime} & \multicolumn{2}{c|}{RoPE} & \multicolumn{2}{c}{w/o MLP} \\
 & MSE & MAE & MSE & MAE & MSE & MAE \\
\midrule
96 & \textbf{0.053} & \textbf{0.180} & \underline{0.058} & \underline{0.186} & 0.078 & 0.218 \\
192 & \textbf{0.073} & \textbf{0.207} & \underline{0.084} & \underline{0.219} & 0.105 & 0.250 \\
336 & \textbf{0.120} & \textbf{0.255} & 0.184 & 0.311 & \underline{0.161} & \underline{0.298} \\
720 & \textbf{0.331} & \textbf{0.416} & 0.411 & 0.480 & \underline{0.358} & \underline{0.439} \\
\midrule
Avg & \textbf{0.144} & \textbf{0.265} & 0.184 & \underline{0.299} & \underline{0.176} & 0.301 \\
\bottomrule
\end{tabular}
}
\end{small}
\end{table}

\textbf{Results and Analysis.} The quantitative results are summarized in Table~\ref{tab:synthetic_results}. StretchTime achieves the lowest MSE and MAE across all prediction horizons, demonstrating robust generalization. Crucially, the performance gap widens significantly as the horizon extends (e.g., a $\mathbf{19.5\%}$ reduction in MSE compared to RoPE at $T=720$). Most notably, the w/o MLP variant surpasses the full RoPE baseline at longer horizons ($T=336$ and $T=720$). This result is pivotal: it confirms that in regimes of temporal warping, the ability to manipulate the temporal dimension via a learnable clock is more valuable than the universal approximation capacity of standard MLPs.

To further validate these findings, we visualize forecast dynamics across multiple test samples in Figure~\ref{fig:all_plots_3x2}. The qualitative comparison reveals a distinct failure mode in the RoPE baseline (right column). While RoPE generally captures the amplitude envelope of the signal, it suffers from severe \emph{phase decoherence} in the prediction window. Because standard RoPE enforces a static rotational velocity, it cannot adapt to the variable frequency of the warped signal, causing the predicted peaks and troughs (blue) to drift out of sync with the ground truth (orange). This desynchronization is systematic across samples. In contrast, StretchTime (left column) maintains tight phase alignment throughout the sequence. The model successfully tracks the shifting periodicity, empirically proving that the symplectic clock $\widehat{\tau}$ effectively ``warps" the attention mechanism to match the non-uniform flow of the underlying system.

\begin{figure}[tb]
    \centering
    \begin{subfigure}{0.23\textwidth}
        \centering
        \text{StretchTime}
    \end{subfigure}\hfill
    \begin{subfigure}{0.23\textwidth}
        \centering
        \text{Attention + RoPE}
    \end{subfigure}

    \vspace{0.5em} 

    \begin{subfigure}[t]{0.23\textwidth}
        \centering
        \includegraphics[width=\linewidth]{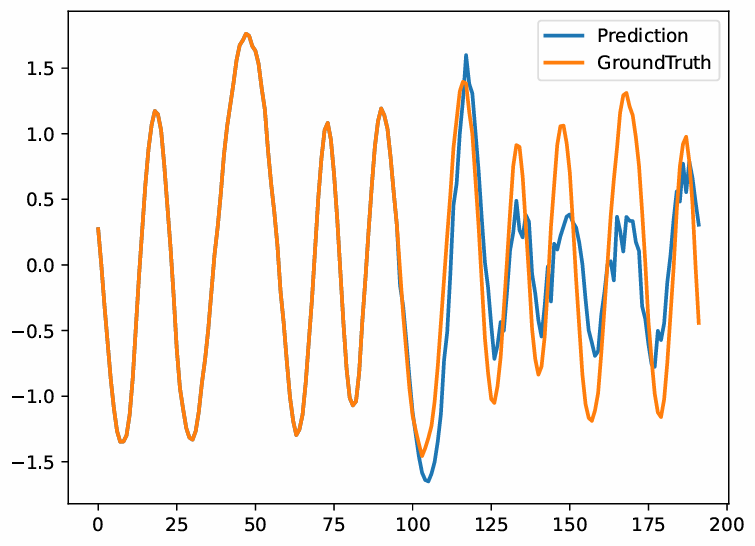}
        \label{fig:plot1}
    \end{subfigure}\hfill
    \begin{subfigure}[t]{0.23\textwidth}
        \centering
        \includegraphics[width=\linewidth]{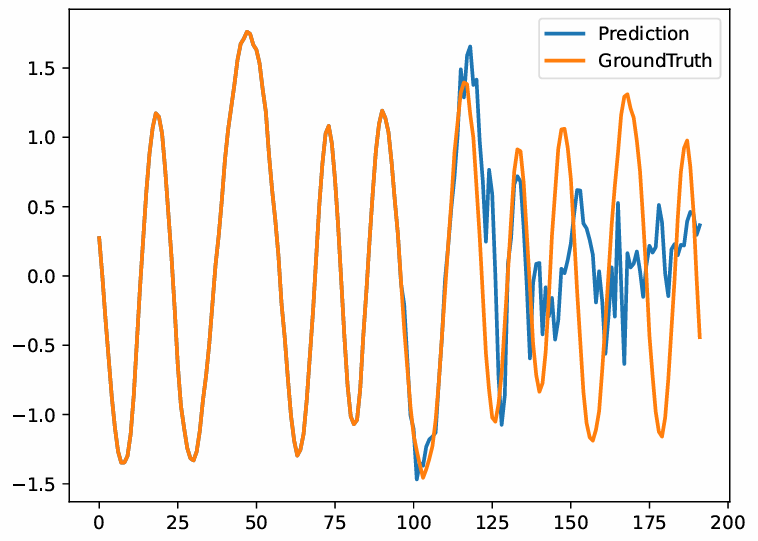}
        \label{fig:plot2}
    \end{subfigure}

    \vspace{0.2em}

    \begin{subfigure}[t]{0.23\textwidth}
        \centering
        \includegraphics[width=\linewidth]{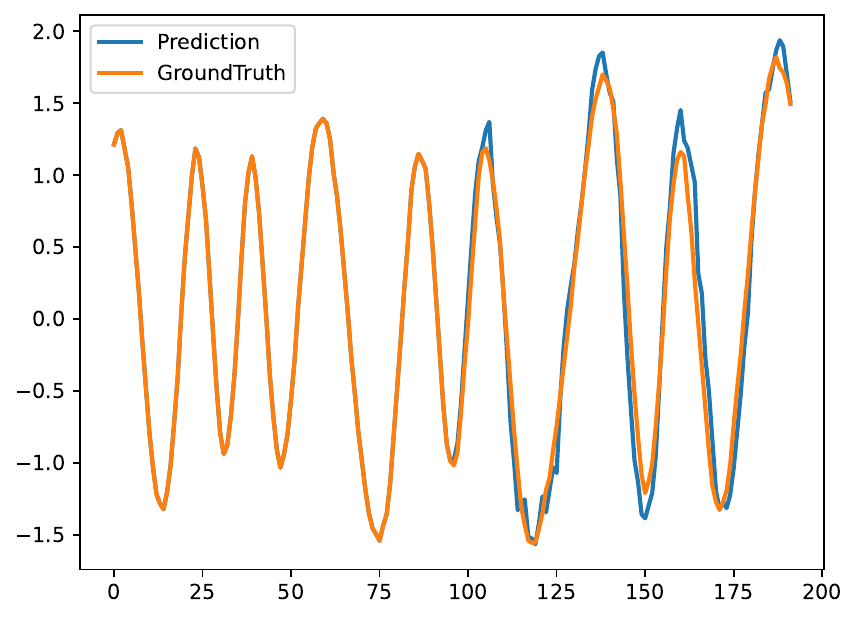}
        \label{fig:plot3}
    \end{subfigure}\hfill
    \begin{subfigure}[t]{0.23\textwidth}
        \centering
        \includegraphics[width=\linewidth]{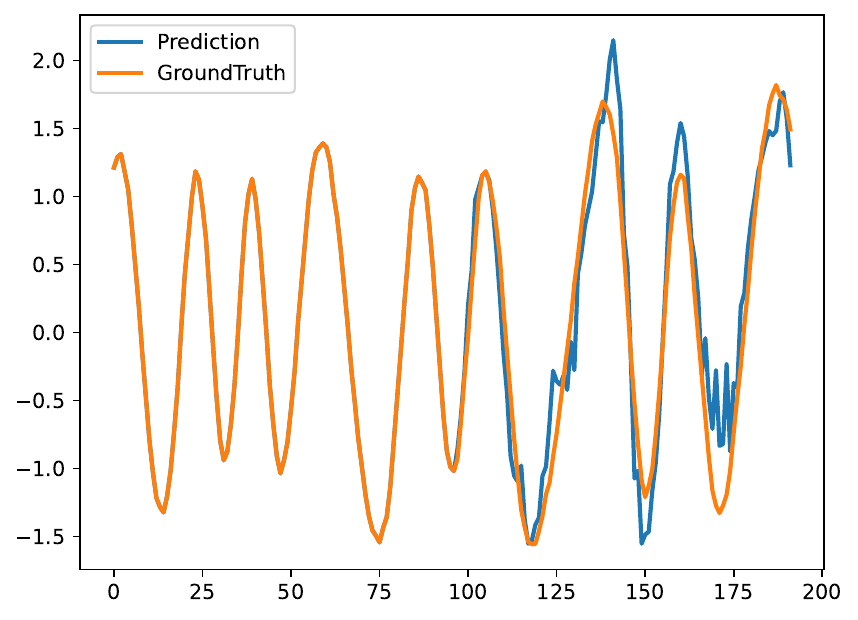}
        \label{fig:plot4}
    \end{subfigure}

    \vspace{0.2em}

    \begin{subfigure}[t]{0.23\textwidth}
        \centering
        \includegraphics[width=\linewidth]{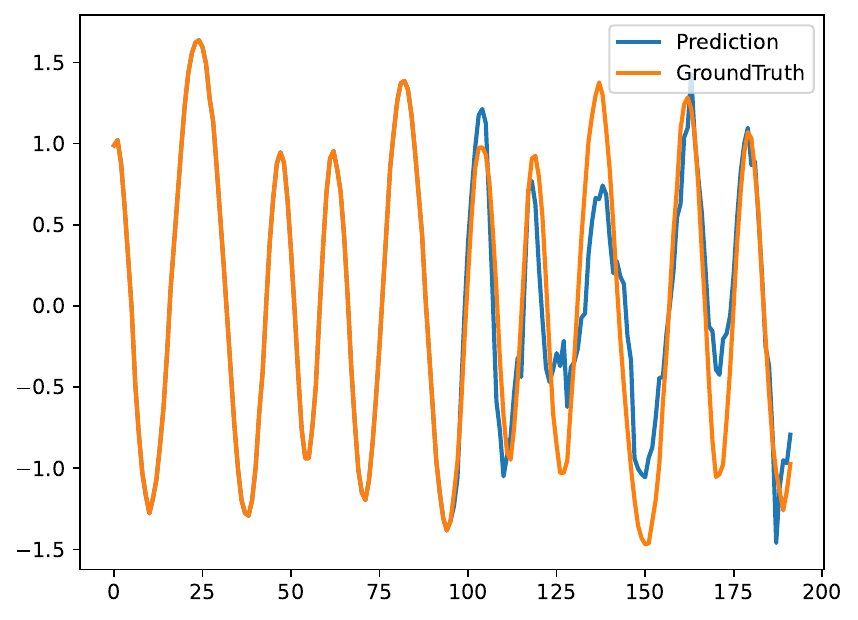}
        \label{fig:plot5}
    \end{subfigure}\hfill
    \begin{subfigure}[t]{0.23\textwidth}
        \centering
        \includegraphics[width=\linewidth]{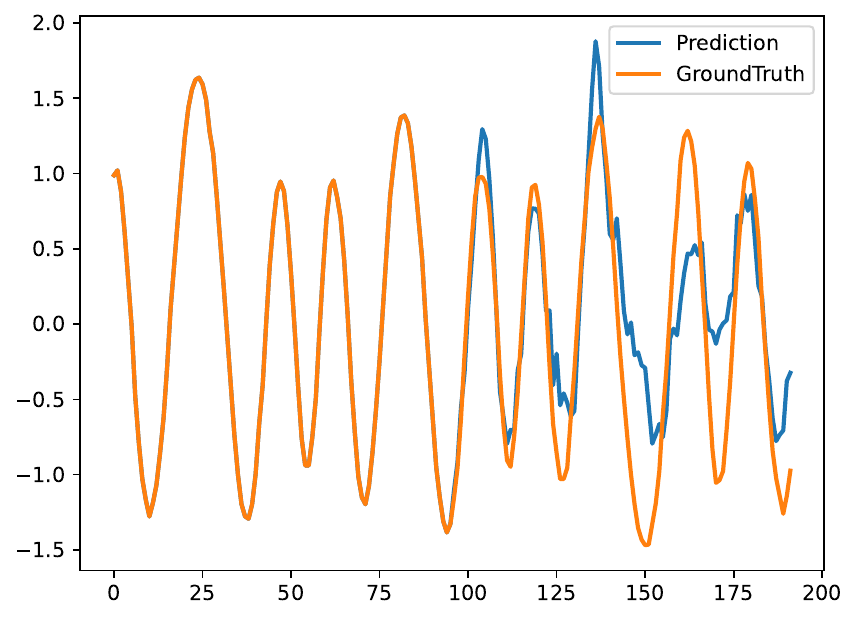}
        \label{fig:plot6}
    \end{subfigure}

    \caption{Forecast visualization on warped seasonal dynamics. StretchTime (left) corrects the phase alignment errors observed in the static RoPE baseline (right).}
    \label{fig:all_plots_3x2}
\end{figure}

\subsection{Multivariate Time Series Forecasting}
\label{sec:exp_multivariate}

We conducted comprehensive experiments on 12 widely used multivariate time series forecasting benchmarks, including the four ETT datasets (ETTh1, ETTh2, ETTm1, ETTm2), four PEMS datasets (PEMS03, PEMS04, PEMS08), Weather, Solar-Energy, and Electricity (ECL). These datasets represent diverse domains characterized by complex temporal dependencies and varying channel correlations; detailed descriptions for each are provided in Appendix~\ref{sec:appendix_datasets}.

\paragraph{Implementation Details}
For our proposed StretchTime, we employed a streamlined configuration with $N=3$ Transformer encoder layers to prevent overfitting. To maintain parameter efficiency across varying channel counts $C$, the hidden dimension was set to $d_{\text{model}}$ = 64 for smaller benchmarks and $d_{\text{model}}$ = 128 for larger ones. We maintained a consistent effective batch size of 32, employing gradient accumulation on varying physical batch sizes (2-32) to accommodate hardware constraints. Crucially, to ensure a strictly fair comparison with reported state-of-the-art results, we adopted the identical testing environment, data splitting protocols, and evaluation metrics as defined in the OLinear and TimeMixer++ benchmarks \citep{yue2025olinearlinearmodeltime, wang2025timemixergeneraltimeseries}.  For more hyperparameter and implementation details, see \ref{sec:impl-details}.

\paragraph{Baselines}
We compared StretchTime against state-of-the-art forecasting models, including recent Transformer-based and linear architectures: OLinear \citep{yue2025olinearlinearmodeltime}, TimeMixer++ \citep{wang2025timemixergeneraltimeseries}, TimeMixer \citep{wang2024timemixerdecomposablemultiscalemixing}, iTransformer \citep{liu2024itransformer}, PatchTST \citep{nie2023patchtst}, TimesNet \citep{wu2023timesnet}, and DLinear \citep{zeng2023dlinear}. Additionally, mirroring our synthetic experiment strategy (Section~\ref{sec:exp_synthetic}), we include an internal baseline, which replaces the SyPE module with standard Rotary Positional Embeddings, once again denoted RoPE. This comparison allows us to verify whether the superiority of SyPE—proven in the controlled synthetic setting—translates to improved performance on complex real-world data.

\begin{table*}[tb]
\caption{Summary of Multivariate TSF Results. Averaged test set MSE are reported. Best results are in \textbf{bold} and second best are \underline{underlined}.}
\label{mainresult_updated}
\centering
\begin{scriptsize}
\setlength{\tabcolsep}{1pt}
\resizebox{\textwidth}{!}{
\begin{tabular}{l|cc|ccccccc}
\toprule
Model & 
\parbox{1.25cm}{\centering StretchTime} & 
\parbox{1.25cm}{\centering RoPE} & 
\parbox{1.25cm}{\centering OLinear} & 
\parbox{1.25cm}{\centering TimeMixer++} & 
\parbox{1.25cm}{\centering TimeMixer} & 
\parbox{1.25cm}{\centering iTransformer} & 
\parbox{1.25cm}{\centering PatchTST} & 
\parbox{1.25cm}{\centering TimesNet} & 
\parbox{1.25cm}{\centering DLinear} \\
\midrule
Weather & \underline{0.237} & 0.244 & \underline{0.237} & \textbf{0.226} & 0.240 & 0.258 & 0.265 & 0.259 & 0.265 \\
Solar & \textbf{0.195} & 0.214 & 0.215 & \underline{0.203} & 0.216 & 0.233 & 0.287 & 0.403 & 0.330 \\
ECL & 0.174 & 0.179 & \textbf{0.159} & \underline{0.165} & 0.182 & 0.178 & 0.216 & 0.192 & 0.225 \\
ETTh1 & \underline{0.424} & 0.449 & \underline{0.424} & \textbf{0.419} & 0.447 & 0.454 & 0.516 & 0.458 & 0.461 \\
ETTh2 & 0.384 & 0.401 & 0.367 & \textbf{0.339} & \underline{0.364} & 0.383 & 0.391 & 0.414 & 0.563 \\
ETTm1 & \textbf{0.367} & 0.379 & 0.374 & \underline{0.369} & 0.381 & 0.407 & 0.406 & 0.400 & 0.404 \\
ETTm2 & 0.276 & 0.287 & \underline{0.270} & \textbf{0.269} & 0.275 & 0.288 & 0.290 & 0.291 & 0.354 \\
PEMS03 & \textbf{0.094} & 0.107 & \underline{0.095} & 0.165 & 0.167 & 0.113 & 0.180 & 0.147 & 0.278 \\
PEMS04 & \textbf{0.088} & \underline{0.090} & 0.091 & 0.136 & 0.185 & 0.111 & 0.195 & 0.129 & 0.295 \\
PEMS08 & \underline{0.118} & 0.125 & \textbf{0.113} & 0.200 & 0.226 & 0.150 & 0.280 & 0.193 & 0.379 \\
\midrule
AvgRank & \textbf{2.20} & 4.20 & \underline{2.30} & 2.80 & 5.00 & 5.30 & 7.75 & 6.70 & 8.55 \\
\#Top1 & \textbf{4} & 0 & \underline{2} & \textbf{4} & 0 & 0 & 0 & 0 & 0 \\
\bottomrule
\end{tabular}
}
\end{scriptsize}
\end{table*}

\paragraph{Main Results}
Table~\ref{mainresult_updated} summarizes the forecasting performance across ten diverse benchmarks. StretchTime achieves the best overall performance, securing the highest average rank of 2.20 and ranking first on 4 out of 10 datasets. It demonstrates particular dominance on datasets characterized by complex, non-stationary periodicity, such as the Traffic (PEMS03, PEMS04) and Solar-Energy datasets, where it outperforms the second-best baselines by clear margins (e.g., reducing MSE on PEMS04 from 0.091 to 0.088). While recent linear models like OLinear and TimeMixer++ show strong performance on specific subsets (e.g., Weather, ECL), StretchTime exhibits superior consistency across diverse domains, never dropping below the top tier of competitive models. This robustness suggests that the symplectic geometric prior offers a flexible inductive bias capable of modeling both highly periodic traffic flows and more stochastic weather patterns.

\paragraph{Comparison with RoPE}
To isolate the contribution of the proposed methodology from the base Transformer architecture, we directly compare StretchTime against the RoPE baseline. The results provide validation of our theoretical premise: StretchTime outperforms RoPE on all 10 datasets, often by substantial margins. For instance, on the Solar dataset—which naturally exhibits varying daylight duration (time warping) across seasons—StretchTime reduces the MSE from 0.214 to 0.195. Similarly, on PEMS03, the error drops from 0.107 to 0.094. This consistent dominance confirms that the performance gains are not merely due to the backbone architecture but are specifically driven by the adaptive warp module $\widehat{\tau}$ and the symplectic attention mechanism. It empirically proves that real-world time series contain non-stationary ``warped" dynamics that static rotational embeddings (RoPE) fail to capture, necessitating the dynamic geometry introduced by SyPE.

\begin{table*}[tb]
\caption{Computational efficiency comparison (FLOPs and Parameters) with Average Rank. Models are ordered to match the main results. Best results are in \textbf{bold} and second best are \underline{underlined}.}
\label{tab:computational_cost}
\centering
\begin{small}
\setlength{\tabcolsep}{4pt}
\resizebox{\textwidth}{!}{
\begin{tabular}{l|cc|cc|cc|cc|cc|cc|cc}
\toprule
\multirow{2}{*}{Horizon} & \multicolumn{2}{c|}{StretchTime} & \multicolumn{2}{c|}{TimeMixer++} & \multicolumn{2}{c|}{TimeMixer} & \multicolumn{2}{c|}{iTransformer} & \multicolumn{2}{c|}{PatchTST} & \multicolumn{2}{c|}{TimesNet} & \multicolumn{2}{c}{DLinear} \\
 & FLOPs & Params & FLOPs & Params & FLOPs & Params & FLOPs & Params & FLOPs & Params & FLOPs & Params & FLOPs & Params \\
\midrule
96 & 1.24G & \underline{471K} & 6.24G & 1.19M & 2.77G & 1.13M & \underline{210M} & 9.56M & 2.51G & 10.1M & 2.26G & 1.19M & \textbf{259K} & \textbf{18.6K} \\
192 & 1.86G & \underline{480K} & 6.25G & 1.2M & 2.81G & 1.14M & \underline{211M} & 9.61M & 2.52G & 10.6M & 3.38G & 1.2M & \textbf{517K} & \textbf{37.2K} \\
336 & 2.79G & \underline{494K} & 6.25G & 1.22M & 2.85G & 1.17M & \underline{213M} & 9.68M & 2.54G & 11.5M & 5.06G & 1.22M & \textbf{905K} & \textbf{65.2K} \\
720 & 5.27G & \underline{530K} & 6.26G & 1.27M & 2.98G & 1.24M & \underline{217M} & 9.88M & 2.59G & 13.9M & 9.57G & 1.25M & \textbf{1.94M} & \textbf{140K} \\
\midrule
Avg Cost & 2.79G & \underline{493.75K} & 6.25G & 1.22M & 2.85G & 1.17M & \underline{212.75M} & 9.68M & 2.54G & 11.53M & 5.07G & 1.22M & \textbf{905.25K} & \textbf{65.25K} \\
AvgRank & 3.75 & \underline{2.00} & 6.75 & 4.25 & 5.00 & 3.00 & \underline{2.00} & 6.00 & 3.75 & 7.00 & 5.75 & 4.75 & \textbf{1.00} & \textbf{1.00} \\
\bottomrule
\end{tabular}
}
\end{small}
\end{table*}

\subsection{Computational Efficiency Analysis}
\label{sec:computational_efficiency}

Table~\ref{tab:computational_cost} details the computational complexity (FLOPs) and parameter counts on the ETTm1 dataset. StretchTime demonstrates a highly favorable trade-off between model capacity and forecasting accuracy, validating the efficiency of our symplectic formulation.

\textbf{Parameter Count.}
A key advantage of our approach is the ability to model complex non-stationary dynamics without excessive over-parameterization. Compared to leading channel-independent Transformers like PatchTST and iTransformer, which require approximately 10M--14M parameters, StretchTime achieves state-of-the-art performance with only 494K parameters on average—a reduction of over $95\%$. This confirms that the symplectic inductive bias effectively captures warped temporal structures that standard architectures must approximate through sheer scale.

\textbf{Computational Cost vs. Accuracy.}
In terms of execution cost, StretchTime significantly outperforms its closest competitor in accuracy, TimeMixer++. While TimeMixer++ incurs a heavy computational burden, with an average of 6.25G FLOPs, StretchTime reduces this cost by approximately $55\%$ with an average of 2.79G FLOPs while maintaining superior predictive performance. Although linear baselines such as DLinear remain the most computationally frugal, they suffer a distinct performance drop (MSE 0.404 vs. 0.367). StretchTime effectively bridges this gap, delivering the high expressivity of deep Transformers with a computational footprint compatible with resource-constrained deployment.

\begin{table}[tb]
\caption{Internal ablation study (MSE only). We compare the full StretchTime model against variants removing the Symplectic Positional Embedding (w/o Symplectic) and the adaptive warp module (w/o Warp). Best results are in \textbf{bold} and second best are \underline{underlined}.}
\label{tab:ablation_study}
\centering
\begin{small}
\setlength{\tabcolsep}{8pt}
\resizebox{\columnwidth}{!}{
\begin{tabular}{l|ccc}
\toprule
Dataset & StretchTime (Full) & w/o Symplectic & w/o Warp \\
\midrule
Weather & \textbf{0.237} & \underline{0.238} & 0.240 \\
ETTh1 & \underline{0.424} & \textbf{0.421} & 0.425 \\
ETTh2 & \textbf{0.384} & \underline{0.384} & 0.387 \\
ETTm1 & \underline{0.367} & \textbf{0.367} & 0.370 \\
ETTm2 & \textbf{0.276} & \underline{0.280} & 0.282 \\
\midrule
Avg & \textbf{0.338} & \underline{0.338} & 0.341 \\
\bottomrule
\end{tabular}
}
\end{small}
\end{table}

\subsection{Ablation Studies}
\label{sec:ablation}

To isolate the contributions of the temporal mechanism ($\widehat{\tau}$) versus the geometric representation ($\mathbf{S}$), we evaluate two structural variants of StretchTime:
\begin{enumerate}
    \item \textbf{w/o Symplectic:} Retains the adaptive warp module $\widehat{\tau}$ but restricts the geometric flow to standard RoPE. (subgroup of orthogonal rotations $\mathrm{SO}(2)$) This tests the necessity of the full symplectic group structure.
    \item \textbf{w/o Warp:} Retains the symplectic geometry but ablates the adaptive warp module ($\widehat{\tau}(t) = t$). This reverts the system to a static symplectic Transformer, testing the necessity of modeling non-uniform temporal flow.
\end{enumerate}

\paragraph{The Primacy of Adaptive Warping.}
The results (Table~\ref{tab:ablation_study}) identify the adaptive warp module as the dominant driver of performance. The w/o Warp variant consistently degrades accuracy, increasing average MSE from 0.338 to 0.341, with significant drops on periodic datasets like Weather and ETTm2. This empirically validates Theorem~\ref{thm:impossibility}: static embeddings are mathematically incapable of synchronizing with non-affine dynamics. The substantial gain achieved by introducing $\widehat{\tau}$ confirms that ``undistorting" the temporal axis is the critical step in modeling real-world elasticity.

\paragraph{Symplectic Generalization and Inductive Biases.}
The comparison between StretchTime and w/o Symplectic validates our formulation as a robust generalization of standard rotations. Since $\mathrm{Sp}(2,\mathbb{R}) \supset \mathrm{SO}(2)$, SyPE strictly subsumes RoPE. The results demonstrate how the model utilizes this expanded hypothesis space:
\begin{itemize}
    \item \textbf{Anisotropic Dynamics (Signal):} On datasets with clean periodicity (\textbf{Weather}, \textbf{ETTm2}), SyPE outperforms the rotation-only variant. Theoretically, this confirms the utility of \emph{anisotropic} transformations—simultaneously squeezing position and stretching momentum—which capture phase-space evolutions that isotropic rotations ($\mathrm{SO}(2)$) cannot represent.
    \item \textbf{Rigidity vs. Expressivity (Noise):} On the highly stochastic \textbf{ETTh1}, the restricted \textbf{w/o Symplectic} variant performs marginally better. Here, the rigid inductive bias of $\mathrm{SO}(2)$ acts as a regularizer, preventing the model from overfitting high-frequency geometric noise.
\end{itemize}
Crucially, the fact that StretchTime matches the baseline on average (0.338) while exceeding it on structured tasks confirms that the model successfully learns to exploit symplectic degrees of freedom when signal structure permits, while recovering standard rotational behavior in noise-dominated regimes.


\section{Conclusion}
\label{sec:conclusion}

This work bridges the gap between static positional encodings and the non-stationary, time-warped dynamics inherent in real-world time series. We demonstrate that standard rotational embeddings (RoPE) are mathematically incapable of representing non-affine temporal progression, leading to phase decoherence in systems with variable periodicity. By generalizing the rotation group to the symplectic group $\mathrm{Sp}(2,\mathbb{R})$ and modulating it via an adaptive warp module, we propose StretchTime, a Transformer architecture that learns to dynamically dilate or contract temporal coordinates. StretchTime achieves state-of-the-art accuracy and superior parameter efficiency compared to existing baselines across diverse benchmarks. As for limitations, we have not yet scaled StretchTime to massive datasets to evaluate its potential as a foundation model for zero-shot forecasting. Additionally, we have not explored applying the symplectic warping framework to general sequence modeling tasks. It would be particularly interesting to investigate whether the richer symplectic representations play a bigger role in complex domains such as natural language processing, potentially offering distinct advantages over standard rotations that are less pronounced in time series forecasting.

\bibliography{strechtime}
\bibliographystyle{icml2026}

\newpage
\appendix
\onecolumn

\section{Appendix: Mathematical Proofs and Derivations}
\label{sec:appendix_math}

This appendix provides the algebraic derivations for the Symplectic Positional Embedding (SyPE), including the proof of symplectic conservation, the closed-form matrix exponential for efficient implementation, and the structural reduction to RoPE.

\subsection{Symplectic Positional Embedding (SyPE) Derivations}
\label{app:sype_derivations}

\paragraph{Proof of Symplectic Conservation (Lemma for Theorem~\ref{thm:sype_rep}).}
The main text relies on the property that the generated flow $\mathbf{S}(t) = \exp(t \mathbf{J}\mathbf{K})$ preserves the symplectic form, i.e., $\mathbf{S}(t)^\top \mathbf{J} \mathbf{S}(t) = \mathbf{J}$. We provide the explicit proof here.

\begin{proof}
Let $\mathbf{M}(t) = \mathbf{S}(t)^\top \mathbf{J} \mathbf{S}(t)$. At $t=0$, $\mathbf{M}(0) = \mathbf{I}^\top \mathbf{J} \mathbf{I} = \mathbf{J}$.
Differentiating $\mathbf{M}(t)$ with respect to $t$:
\begin{align}
\frac{d}{dt} \mathbf{M}(t) &= \dot{\mathbf{S}}^\top \mathbf{J} \mathbf{S} + \mathbf{S}^\top \mathbf{J} \dot{\mathbf{S}} \\
&= (\mathbf{J}\mathbf{K}\mathbf{S})^\top \mathbf{J} \mathbf{S} + \mathbf{S}^\top \mathbf{J} (\mathbf{J}\mathbf{K}\mathbf{S}) \quad (\text{using } \dot{\mathbf{S}} = \mathbf{A}\mathbf{S} = \mathbf{J}\mathbf{K}\mathbf{S}) \\
&= \mathbf{S}^\top \mathbf{K}^\top \mathbf{J}^\top \mathbf{J} \mathbf{S} + \mathbf{S}^\top \mathbf{J}^2 \mathbf{K} \mathbf{S}.
\end{align}
Recall the properties of the symplectic matrix $\mathbf{J} = \begin{psmallmatrix} 0 & 1 \\ -1 & 0 \end{psmallmatrix}$: $\mathbf{J}^\top = -\mathbf{J}$, $\mathbf{J}^2 = -\mathbf{I}$, and $\mathbf{J}^\top \mathbf{J} = \mathbf{I}$. Also, $\mathbf{K}$ is symmetric ($\mathbf{K}^\top = \mathbf{K}$). Substituting these:
\begin{align}
\frac{d}{dt} \mathbf{M}(t) &= \mathbf{S}^\top \mathbf{K} (-\mathbf{J}) \mathbf{J} \mathbf{S} + \mathbf{S}^\top (-\mathbf{I}) \mathbf{K} \mathbf{S} \\
&= \mathbf{S}^\top \mathbf{K} (-\mathbf{J}^2) \mathbf{S} - \mathbf{S}^\top \mathbf{K} \mathbf{S} \\
&= \mathbf{S}^\top \mathbf{K} (\mathbf{I}) \mathbf{S} - \mathbf{S}^\top \mathbf{K} \mathbf{S} \\
&= 0.
\end{align}
Since the derivative is zero for all $t$, $\mathbf{M}(t)$ is constant and equals $\mathbf{M}(0) = \mathbf{J}$.
\end{proof}

\paragraph{Closed-Form Implementation of Symplectic Flow.}
To avoid computationally expensive matrix exponentiation during training, we derive the closed-form solution for $\mathbf{S}(t) = \exp(t\mathbf{A})$, where $\mathbf{A} = \mathbf{J}\mathbf{K} = \begin{psmallmatrix} c & b \\ -a & -c \end{psmallmatrix}$.

Compute the square of the generator $\mathbf{A}$:
\begin{equation}
\mathbf{A}^2 = \begin{pmatrix} c & b \\ -a & -c \end{pmatrix} \begin{pmatrix} c & b \\ -a & -c \end{pmatrix} = \begin{pmatrix} c^2 - ab & bc - bc \\ -ac + ac & -ab + c^2 \end{pmatrix} = -(ab - c^2)\mathbf{I}.
\end{equation}
Let $\omega^2 = ab - c^2$. Provided the stability constraint $ab - c^2 > 0$ holds (see below), we can define the frequency $\omega = \sqrt{ab - c^2}$. The power series for the matrix exponential splits into even and odd terms:
\begin{align}
\mathbf{S}(t) = e^{t\mathbf{A}} &= \sum_{k=0}^{\infty} \frac{t^{2k}}{(2k)!} \mathbf{A}^{2k} + \sum_{k=0}^{\infty} \frac{t^{2k+1}}{(2k+1)!} \mathbf{A}^{2k+1} \\
&= \left( \sum_{k=0}^{\infty} \frac{(-1)^k (\omega t)^{2k}}{(2k)!} \right) \mathbf{I} + \left( \sum_{k=0}^{\infty} \frac{(-1)^k (\omega t)^{2k+1}}{(2k+1)!} \right) \frac{\mathbf{A}}{\omega} \\
&= \cos(\omega t)\mathbf{I} + \frac{\sin(\omega t)}{\omega}\mathbf{A}.
\end{align}
This formula allows efficient $\mathcal{O}(1)$ computation of the flow for every token.

\paragraph{Stability Constraints and Parameterization.}
For the flow to be oscillatory (stable) rather than hyperbolic (divergent), the generator $\mathbf{A}$ must have purely imaginary eigenvalues. The characteristic equation is $\lambda^2 + (ab - c^2) = 0$. Stability requires the determinant of $\mathbf{K}$ to be positive: $\det(\mathbf{K}) = ab - c^2 > 0$.
We enforce this via the following parameterization during training:
\begin{equation}
a = e^\alpha, \quad b = e^\beta, \quad c = \rho \sqrt{ab} \quad \text{with } \rho = \tanh(\gamma) \in (-1, 1).
\end{equation}
This guarantees $ab - c^2 = ab(1 - \rho^2) > 0$, ensuring strictly stable dynamics.

\subsection{Relationship to RoPE}
\label{app:rope_connection}

\paragraph{RoPE as a Special Case.}
We demonstrate that SyPE strictly generalizes Rotary Positional Embeddings. Consider the case where the Hamiltonian is isotropic, $\mathbf{K} = \omega \mathbf{I}$.
The generator becomes $\mathbf{A} = \mathbf{J}(\omega \mathbf{I}) = \omega \mathbf{J}$.
The flow becomes:
\begin{equation}
\mathbf{S}(t) = \exp(t \omega \mathbf{J}) = \cos(\omega t)\mathbf{I} + \sin(\omega t)\mathbf{J} = \begin{pmatrix} \cos(\omega t) & \sin(\omega t) \\ -\sin(\omega t) & \cos(\omega t) \end{pmatrix}.
\end{equation}
This is exactly the rotation matrix $\mathbf{R}(\omega t)$ used in RoPE.
Under SyPE, the interaction term is $\mathbf{q}^\top \mathbf{J} \mathbf{S}(\Delta t) \mathbf{k}$. Substituting $\mathbf{S}(\Delta t) = \mathbf{R}(\omega \Delta t)$:
\begin{equation}
\mathbf{q}^\top \mathbf{J} \mathbf{R}(\theta) \mathbf{k}.
\end{equation}
By choosing orthogonal basis vectors for $\mathbf{q}$ and $\mathbf{k}$ (e.g., $\mathbf{q}=[0,1]^\top, \mathbf{k}=[1,0]^\top$), this recovers the standard relative attention score $\cos(\theta)$, confirming that RoPE is the specific instance of SyPE where the Hamiltonian is fixed to the identity matrix.

\paragraph{Extension to High Dimensions.}
Standard RoPE applies 2D rotations to pairs of dimensions. SyPE follows the same structural logic. For a head dimension $d_h$, we decompose the space into $d_h/2$ disjoint 2D subspaces. We learn a separate Hamiltonian $\mathbf{K}_i$ (and thus a separate flow $\mathbf{S}_i$) for each subspace $i$. The full transformation is block-diagonal:
\begin{equation}
\mathbf{S}_{total}(t) = \text{blkdiag}\left(\mathbf{S}_1(t), \mathbf{S}_2(t), \dots, \mathbf{S}_{d_h/2}(t)\right).
\end{equation}
This preserves the computational efficiency of RoPE while allowing each frequency band to learn its own anisotropic time-warping dynamics.

\section{Appendix: Limitations of Other Positional Encodings}
\label{app:other_pe_limitations}

In the main text, we demonstrated that Rotary Positional Embeddings (RoPE) cannot represent non-affine time warping because they enforce a stationary rotational frequency. Here, we extend this analysis to other common encoding schemes: Relative Positional Encodings (RPE) and Learnable Additive/Absolute Positional Encodings (APE).

\subsection{Relative Positional Encodings (RPE)}
\label{app:rpe_limitations}

Standard RPE methods (e.g., T5, ALiBi) inject a bias term $B_{m,n}$ into the attention scores that depends solely on the index distance $m-n$. We show that this translation-invariant formulation is fundamentally incompatible with time warping, where the ``physical" duration of a step varies over time.

\begin{proposition}[Inconsistency of RPE with Warped Time]
Let $\tau: \mathbb{N} \to \mathbb{R}_+$ be the underlying warping function. An RPE mechanism defines a bias $b: \mathbb{Z} \to \mathbb{R}$ such that the attention modification is $B_{m,n} = b(m-n)$.
If $\tau$ is non-affine, there exists no function $b(\cdot)$ such that the attention bias depends solely on the warped time distance $\Delta \tau_{m,n} = \tau(m) - \tau(n)$.
\end{proposition}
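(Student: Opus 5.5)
We need to pin down what "depends solely on warped distance" means. I will read it as follows: there is a function $g:\mathbb{R}\to\mathbb{R}$ with $b(m-n) = g(\tau(m)-\tau(n))$ for all $m,n$. The requirement should be \emph{faithful}, meaning $b$ must separate distinct warped distances: $\tau(m)-\tau(n)\neq\tau(m')-\tau(n')$ must force $b(m-n)\neq b(m'-n')$. Under the weakest reading (only $b = g\circ\Delta\tau$) a constant $b$ trivially works. So I will state explicitly that the proposition means $b$ must encode $\Delta\tau$ injectively, i.e.\ $\Delta\tau_{m,n}$ is recoverable from $B_{m,n}$, in analogy with the phase-matching requirement of Theorem~\ref{thm:impossibility}.

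The proof is a pigeonhole on lags, mirroring the proof of Theorem~\ref{thm:impossibility}. Suppose such a $b$ exists. Since $B_{m,n}=b(m-n)$, all pairs with the same index lag $\delta=m-n$ receive the same bias. Injectivity then forces $\tau(n+\delta)-\tau(n)$ to be independent of $n$ for every fixed $\delta$. Take $\delta=1$. Then $\Delta\tau(t)=\tau(t+1)-\tau(t)$ is a constant $\kappa$ for all $t$. Telescoping gives $\tau(t)=\tau(1)+\kappa(t-1)$, which is affine and contradicts the hypothesis.

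Equivalently, in contrapositive form: non-affinity gives some $t_1\neq t_2$ with $\Delta\tau(t_1)\neq\Delta\tau(t_2)$. The pairs $(t_1+1,t_1)$ and $(t_2+1,t_2)$ share index lag $1$, so they receive the identical bias $b(1)$. Yet their warped distances differ, so the bias cannot depend solely on the warped distance in the faithful sense. I would present this two-pair counterexample as the core argument, because it is concrete. I would also remark that the same argument covers ALiBi, where $b(\delta)=-\lambda|\delta|$, and T5 buckets, since both are functions of $m-n$ alone.

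The main obstacle is the interpretive step, not the computation. Read literally with an arbitrary $g$, the statement is false, since constant $b$ works. So the proof must first fix the faithfulness requirement. I would formalize it as \emph{$g$ injective}, or alternatively require that $B_{m,n}$ reproduce a prescribed non-constant target such as $g(\Delta\tau)=\omega_0\Delta\tau$, paralleling Theorem~\ref{thm:impossibility}. Under the prescribed-target version the argument is identical: $b(1)=\omega_0\Delta\tau(t)$ for all $t$ forces constant increments. Once that convention is stated, the rest is the one-line pigeonhole above.
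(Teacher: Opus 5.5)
Your proposal is correct and follows essentially the same route as the paper: the paper also requires an injective $g$ with $b(m-n)=g(\tau(m)-\tau(n))$, and it gets a contradiction from the fact that a fixed lag receives a constant bias while the warped difference at that lag varies with $n$. Fixing the lag at $\delta=1$ is slightly more careful than the paper, which asserts that every $k$-step difference $\tau(n+k)-\tau(n)$ is non-constant for non-affine $\tau$; that is false in general (alternating increments $1,2,1,2,\dots$ make the $2$-step difference constant), although one lag suffices for the contradiction.
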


\begin{proof}
We aim to find a bias function $b$ such that $b(m-n) = g(\tau(m) - \tau(n))$ for some injective function $g$ (representing the desired dependency on warped time).

Consider a fixed index lag $k \in \mathbb{Z} \setminus \{0\}$. The RPE bias $b(k)$ is a constant value for all pairs $(m, n)$ where $m-n = k$.
However, the warped time difference for this fixed lag is $\delta_k(n) = \tau(n+k) - \tau(n)$.
If $\tau$ is non-affine, the discrete derivative (and by extension the $k$-step difference) is not constant; $\delta_k(n)$ varies with $n$.
Thus, we require a constant $b(k)$ to map to a variable $\delta_k(n)$ via $g$, which is a contradiction. The RPE mechanism will assign the exact same positional bias to pairs $(n+k, n)$ regardless of whether the physical time elapsed between them is short (compressed time) or long (stretched time).
\end{proof}

\begin{remark}[Intuition]
RPE assumes that ``5 steps ago" always means the same thing. In a warped system (e.g., a process accelerating), ``5 steps ago" might represent 1 second of physical time early in the sequence, but 0.1 seconds later on. RPE cannot distinguish these cases.
\end{remark}

\subsection{Additive (Absolute) Positional Encodings (APE)}
\label{app:ape_limitations}

Learnable Additive PE assigns a unique vector $\mathbf{p}_t \in \mathbb{R}^d$ to each time step $t$, which is added to the input: $\mathbf{z}_t = \mathbf{x}_t + \mathbf{p}_t$.
While a learnable APE is highly flexible and can theoretically memorize the warping of a \emph{single} univariate series (by overfitting $\mathbf{p}_t$ to the specific value required at step $t$), it fails in the \textbf{Multivariate Channel-Independent (CI)} setting.

In modern CI forecasting (e.g., PatchTST, iTransformer), the same model parameters (and usually the same positional embeddings) are shared across all channels to enable generalization. We show that a shared APE cannot model a system where different channels undergo distinct warping dynamics (heterogeneity).

\begin{proposition}[Impossibility of Shared APE for Heterogeneous Warping]
Consider a multivariate system with $C$ channels. Let the dynamics of channel $i$ be governed by a function $f$ of its specific warped clock $\tau_i(t)$:
\[ x_t^{(i)} = f(\tau_i(t)) \]
Assume a Channel-Independent Transformer where a shared positional embedding $\mathbf{p}_t$ is added to all channels, and the network learns a shared mapping $\mathcal{F}: \mathbb{R}^d \to \mathbb{R}$ to predict values based on position.
If there exist two channels $i, j$ with distinct warping functions $\tau_i \neq \tau_j$ (and $f$ is non-trivial), there exists no single sequence of embeddings $\{\mathbf{p}_t\}_{t=1}^N$ that allows $\mathcal{F}$ to correctly predict both signals simultaneously.
\end{proposition}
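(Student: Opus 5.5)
The argument will be by contradiction, with a collision argument at a single time step. The key observation is that in the Channel-Independent setting with a shared embedding, the network's positional prediction for channel $i$ at step $t$ is $\mathcal{F}(\mathbf{p}_t)$, which is the same expression for channel $j$. Nothing in the input that $\mathcal{F}$ sees distinguishes the channels. I will read the statement as: the prediction "based on position" is a function of $\mathbf{p}_t$ alone, with no channel identifier and no channel-specific content entering $\mathcal{F}$. I will state this reading explicitly at the start, since without it the claim is false. In StretchTime, for instance, the channel embedding $\mathbf{e}_c$ breaks the symmetry.

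Suppose, for contradiction, that a sequence $\{\mathbf{p}_t\}_{t=1}^N$ exists with $\mathcal{F}(\mathbf{p}_t) = f(\tau_i(t))$ and $\mathcal{F}(\mathbf{p}_t) = f(\tau_j(t))$ for all $t$. Subtracting gives $f(\tau_i(t)) = f(\tau_j(t))$ for every $t$, so the two target signals $x^{(i)}$ and $x^{(j)}$ coincide.

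The remaining step is to exhibit a $t$ with $f(\tau_i(t)) \neq f(\tau_j(t))$, and this is the main obstacle. The hypotheses "$\tau_i \neq \tau_j$" and "$f$ non-trivial" do not directly guarantee it. For example, if $f$ is $2\pi/\omega_0$-periodic as in Eq.~\ref{eq:warped_ar1}, then two warps that differ by a multiple of the period give identical signals. I would make the non-triviality precise as the condition that the induced signals differ: there exists $t^\ast$ with $\tau_i(t^\ast) \neq \tau_j(t^\ast)$ and $f(\tau_i(t^\ast)) \neq f(\tau_j(t^\ast))$. A sufficient condition is that $f$ is injective on the range of the clocks. Another sufficient condition, in the paper's sinusoidal model, is that the non-aliasing assumption of Theorem~\ref{thm:impossibility} holds together with $|\omega_0(\tau_i(t)-\tau_j(t))| < \pi$ at some $t$ where the clocks differ and are not reflection-symmetric about a peak.

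Under this condition, evaluating at $t^\ast$ gives $\mathcal{F}(\mathbf{p}_{t^\ast})$ equal to two distinct values. That contradicts $\mathcal{F}$ being a function, so no shared sequence $\{\mathbf{p}_t\}$ exists. I will close with a remark connecting this to Section~\ref{sec:sype}: making the clock $\widehat{\tau}$ depend on the input $\mathbf{h}_t$, which differs across channels, is exactly what breaks this collision.
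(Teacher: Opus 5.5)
Your proof is correct. At its core it is the same single-time-step collision the paper uses, but you run it forward and under the right hypothesis, and this repairs a real gap in the paper's argument.

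The paper writes $\mathbf{p}_t \approx \mathcal{F}^{-1}(f(\tau_c(t)))$, infers that $\mathbf{p}_t$ must ``encode'' both $\tau_i(t)$ and $\tau_j(t)$, and declares a conflict merely because $\tau_i(t)\neq\tau_j(t)$. There are two problems with this. First, $\mathcal{F}:\mathbb{R}^d\to\mathbb{R}$ has no inverse. Second, a fixed $\mathbf{p}_t$ is only over-constrained if $f$ actually separates the two clock values, so the paper's step implicitly assumes something like injectivity of $f$. Your forward version says $\mathcal{F}(\mathbf{p}_{t^\ast})$ would have to equal two distinct numbers $f(\tau_i(t^\ast))\neq f(\tau_j(t^\ast))$. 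It needs no inverse and isolates exactly what the contradiction requires.

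Your periodic example is decisive. Take $f=\sin$, $\tau_i(t)=t^2$, $\tau_j(t)=t^2+2\pi$; both clocks are monotone, positive, non-affine, and distinct. The two signals coincide, so a shared $\mathbf{p}_t$ trivially works. Hence the proposition as literally stated is false, and your sharpened notion of non-triviality is necessary, not cosmetic.

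One small slip is in your sinusoidal side condition. We have $\sin u=\sin v$ iff $u-v\in 2\pi\mathbb{Z}$ or $u+v\in\pi+2\pi\mathbb{Z}$. With $u=\omega_0\tau_i(t^\ast)$ and $v=\omega_0\tau_j(t^\ast)$, you must therefore exclude reflection symmetry about troughs as well as peaks, i.e.\ about any extremum. The non-aliasing assumption on increments plays no role here. The exact condition is simply that both alternatives fail at some $t^\ast$.
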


\begin{proof}
For the network to predict the signal value $x_t^{(c)}$ solely from the position (a simplified proxy for time-dependent dynamics), it must learn a mapping $\mathcal{F}$ such that $\mathcal{F}(\mathbf{p}_t) \approx f(\tau_c(t))$.
This implies:
\[ \mathbf{p}_t \approx \mathcal{F}^{-1}( f(\tau_c(t)) ). \]
For this to hold for channel $i$, we require $\mathbf{p}_t$ to encode $\tau_i(t)$.
For this to hold for channel $j$, we require $\mathbf{p}_t$ to encode $\tau_j(t)$.
If $\tau_i(t) \neq \tau_j(t)$ (e.g., channel $i$ is accelerating while channel $j$ is constant), we have a conflict: $\mathbf{p}_t$ must simultaneously map to two different values in the domain of $f$. Since $\mathbf{p}_t$ is a fixed vector for index $t$, it cannot satisfy these contradictory constraints.
\end{proof}

\begin{remark}[The ``One-Size-Fits-None" Problem]
In a multivariate dataset, Channel A might exhibit a ``fast" seasonal cycle (e.g., weekly) while Channel B exhibits a "slow" cycle (e.g., monthly), or they may speed up/slow down at different rates. A standard Additive PE forces the model to learn a single ``average" notion of time for index $t$. Consequently, the model will likely underfit both channels, failing to capture the sharp peaks of the fast channel and the broad plateaus of the slow channel. Our method (SyPE) resolves this by dynamically stretching the clock $\widehat{\tau}$ specifically for each channel/sample.
\end{remark}

\section{Appendix: Datasets and Experimental Details}
\label{sec:appendix_datasets}

\subsection{Experimental Datasets}
We evaluate our model on eight widely used multivariate time series benchmark datasets. Table \ref{mainresult_updated} summarizes the statistics of these datasets. The full names and important features of these datasets are summarized as follows:

\textbf{Weather Dataset ~\citep{wu2021autoformer}:} This dataset contains local climatological data recorded every 10 minutes for 2020, including 21 meteorological indicators such as air temperature, humidity, and wind speed.

\textbf{Solar Dataset ~\citep{10.1145/3209978.3210006}:} The Solar dataset contains solar power production records from 137 photovoltaic plants in Alabama State in 2006, sampled every 10 minutes.

\textbf{Electricity Dataset ~\citep{wu2021autoformer}:} This dataset records the hourly electricity consumption of 321 consumers. It captures long-term dependencies and periodic patterns in energy usage over a three-year period from 2012 to 2014.

\textbf{ETT Dataset \citep{zhou2021informer}:} The ETT (Electricity Transformer Temperature) dataset consists of data collected from electricity transformers, including load and oil temperature, recorded every 15 minutes (ETTm1, ETTm2) and every hour (ETTh1, ETTh2) over a period of two years (2016-2018).

\textbf{PEMS Dataset ~\citep{li2018diffusionconvolutionalrecurrentneural}:} This dataset consists of public traffic network data collected by the California Transportation Agencies (CalTrans) Performance Measurement System (PeMS). The data is sampled at 5-minute intervals from sensors deployed across various districts in California to capture complex spatial-temporal dependencies. In this work, we utilize three widely adopted subsets, specifically PEMS03, PEMS04, and PEMS08, which serve as established benchmarks for evaluating traffic flow forecasting across networks ranging from 170 to over 800 sensors.

\textbf{Additional Discussion on Baselines}

\textbf{Computational Constraints:} We exclude results for the Traffic and PEMS07 datasets due to excessive memory usage on TimeMixer~\citep{wang2024timemixerdecomposablemultiscalemixing} with our hardware (NVIDIA RTX4090)

\textbf{Baseline Result Sourcing:} We source results for the TimeMixer++~\citep{wang2025timemixergeneraltimeseries} and OLinear~\citep{yue2025olinearlinearmodeltime} baselines directly from their original publications. For TimeMixer++, we rely on the reported figures as we were unable to reproduce the performance using the official codebase under our experimental settings.

\textbf{Exchange Dataset Exclusion:} We omit the Exchange dataset as financial time series are dominated by random walk properties~\citep{7104d0bd-69aa-330c-badb-94aa0c8f5bac}. As demonstrated by PatchTST~\citep{nie2023patchtst}, trivial ``last-value" baselines perform comparably to state-of-the-art models on this benchmark, indicating that it provides limited value for evaluating learned long-term temporal dependencies.

\subsection{Hyper-parameter Settings and Implementation Details}

\label{sec:impl-details}

\textbf{Model Architecture - Multivariate Time Series Forecasting.} For the hyper-parameter settings of StretchTime for multivariate time series forecasting tasks, we use $N=3$ Transformer encoder layers and $n_{\text{heads}}=4$ attention heads. We adjust the hidden dimension $d_{\text{model}}$ according to dataset scale, utilizing 64 for smaller benchmarks and 128 for larger ones. We utilize a standard feedforward dimension of $4d_{\text{model}}$. A dropout rate of 0.1 was applied only to the \textit{ETT} and \textit{Weather} datasets. We employ the GPT-2 weight initialization scheme. For the symplectic flow parameters, we enforce stability constraints to ensure non-divergent dynamics. We experimented with both LayerNorm and RMSNorm, opting for standard LayerNorm applied before the symplectic attention mechanism.

\textbf{Model Architecture - Synthetic Tasks.} For the Synthetic dataset, we follow a similar setup as the multivariate time series forecasting tasks, only changing a few hyperparameters. For consistency with our claims in section \ref{sec:method}, we set the number of attention layers $n_{\text{layers}} = 1$, the model dimension $d_{\text{model}}$ = 128, feed-forward dimension to $d_{\text{ff}}=512$, and use $n_{\text{heads}}=4$ attentions heads. Training is performed using the AdamW optimizer with a learning rate of $2 \times 10^{-4}$ and a dropout rate of $0.1$. 

\textbf{Data Processing Strategy.} To handle non-stationarity, we employ a ``Last Value Residual" strategy where the input sequence is centered relative to the last observed value $x_L$, and the mean is added back to the final prediction. All input series are processed using a Channel-Independent strategy where applicable. 

\textbf{Training and Hardware.} The random seed used in all experiments is 2026. All training tasks in this paper were conducted using a single NVIDIA RTX4090 GPU or L40S GPU. We standardize the effective batch size at 32. To accommodate the larger datasets (Electricity, Solar, PEMS), we use a physical batch size of 2-4 with 8-16 gradient accumulation steps to maintain this effective size. During training, StretchTime is trained using the MSE loss function. We use the AdamW optimizer with a learning rate of \textbf{$5 \times 10^{-4}$} and a cosine annealing scheduler, with an early-stopping patience set to 12 epochs.
\newpage
\section{Appendix: Full Multivariate Time Series Forecasting Results Table}
\begin{table}[h!]
\caption{Full Multivariate TSF Results. We report MSE and MAE. Best results are in \textbf{bold} and second best are \underline{underlined}. AvgRank and \#Top1 are calculated based on MSE across all horizons.}
\label{tab:full_results}
\centering
\begin{scriptsize}
\setlength{\tabcolsep}{2pt}
\resizebox{\textwidth}{!}{
\begin{tabular}{ll|cccccccccccccccccc}
\toprule
\multicolumn{2}{c}{Model} &\multicolumn{2}{c}{StretchTime} & \multicolumn{2}{c}{RoPE} & \multicolumn{2}{c}{OLinear} & \multicolumn{2}{c}{TimeMixer++} & \multicolumn{2}{c}{TimeMixer} & \multicolumn{2}{c}{iTransformer} & \multicolumn{2}{c}{PatchTST} & \multicolumn{2}{c}{TimesNet} & \multicolumn{2}{c}{DLinear} \\
Dataset & Len & MSE & MAE & MSE & MAE & MSE & MAE & MSE & MAE & MSE & MAE & MSE & MAE & MSE & MAE & MSE & MAE & MSE & MAE \\
\midrule
Weather & 96 & \textbf{0.143} & \underline{0.203} & \underline{0.146} & 0.206 & 0.153 & \textbf{0.190} & 0.155 & 0.205 & 0.163 & 0.209 & 0.174 & 0.214 & 0.186 & 0.227 & 0.172 & 0.220 & 0.195 & 0.252 \\
 & 192 & \textbf{0.196} & 0.258 & 0.201 & 0.262 & \underline{0.200} & \textbf{0.235} & 0.201 & \underline{0.245} & 0.208 & 0.250 & 0.221 & 0.254 & 0.234 & 0.265 & 0.219 & 0.261 & 0.237 & 0.295 \\
 & 336 & 0.253 & 0.304 & 0.270 & 0.317 & 0.258 & \underline{0.280} & \textbf{0.237} & \textbf{0.265} & \underline{0.251} & 0.287 & 0.278 & 0.296 & 0.284 & 0.301 & 0.280 & 0.306 & 0.282 & 0.331 \\
 & 720 & 0.355 & 0.373 & 0.360 & 0.374 & \underline{0.337} & \textbf{0.333} & \textbf{0.312} & \underline{0.334} & 0.339 & 0.341 & 0.358 & 0.347 & 0.356 & 0.349 & 0.365 & 0.359 & 0.345 & 0.382 \\
\midrule
Solar & 96 & \textbf{0.158} & 0.241 & 0.190 & 0.268 & 0.179 & \textbf{0.191} & \underline{0.171} & \underline{0.231} & 0.189 & 0.259 & 0.203 & 0.237 & 0.265 & 0.323 & 0.373 & 0.358 & 0.290 & 0.378 \\
 & 192 & \textbf{0.191} & 0.267 & 0.214 & 0.279 & \underline{0.209} & \textbf{0.213} & 0.218 & 0.263 & 0.222 & 0.283 & 0.233 & \underline{0.261} & 0.288 & 0.332 & 0.397 & 0.376 & 0.320 & 0.398 \\
 & 336 & \underline{0.214} & 0.293 & 0.233 & 0.294 & 0.231 & \textbf{0.229} & \textbf{0.212} & \underline{0.269} & 0.231 & 0.292 & 0.248 & 0.273 & 0.301 & 0.339 & 0.420 & 0.380 & 0.353 & 0.415 \\
 & 720 & \underline{0.217} & 0.272 & 0.218 & 0.286 & 0.241 & \textbf{0.236} & \textbf{0.212} & \underline{0.270} & 0.223 & 0.285 & 0.249 & 0.275 & 0.295 & 0.336 & 0.420 & 0.381 & 0.357 & 0.413 \\
\midrule
ECL & 96 & 0.142 & 0.247 & 0.148 & 0.253 & \textbf{0.131} & \textbf{0.221} & \underline{0.135} & \underline{0.222} & 0.153 & 0.247 & 0.148 & 0.240 & 0.190 & 0.296 & 0.168 & 0.272 & 0.210 & 0.302 \\
 & 192 & 0.161 & 0.260 & 0.163 & 0.266 & \underline{0.150} & \underline{0.238} & \textbf{0.147} & \textbf{0.235} & 0.166 & 0.256 & 0.162 & 0.253 & 0.199 & 0.304 & 0.184 & 0.322 & 0.210 & 0.305 \\
 & 336 & 0.176 & 0.283 & 0.182 & 0.288 & \underline{0.165} & \underline{0.254} & \textbf{0.164} & \textbf{0.245} & 0.185 & 0.277 & 0.178 & 0.269 & 0.217 & 0.319 & 0.198 & 0.300 & 0.223 & 0.319 \\
 & 720 & 0.218 & 0.319 & 0.223 & 0.325 & \textbf{0.191} & \textbf{0.279} & \underline{0.212} & \underline{0.310} & 0.225 & \underline{0.310} & 0.225 & 0.317 & 0.258 & 0.352 & 0.220 & 0.320 & 0.258 & 0.350 \\
\midrule
ETTh1 & 96 & 0.371 & \underline{0.400} & 0.381 & 0.407 & \textbf{0.360} & \textbf{0.382} & \underline{0.361} & 0.403 & 0.375 & \underline{0.400} & 0.386 & 0.405 & 0.460 & 0.447 & 0.384 & 0.402 & 0.397 & 0.412 \\
 & 192 & \underline{0.418} & 0.429 & 0.430 & 0.439 & \textbf{0.416} & \textbf{0.414} & \textbf{0.416} & 0.441 & 0.429 & \underline{0.421} & 0.441 & 0.512 & 0.477 & 0.429 & 0.436 & 0.429 & 0.446 & 0.441 \\
 & 336 & \underline{0.451} & 0.446 & 0.472 & 0.464 & 0.457 & \underline{0.438} & \textbf{0.430} & \textbf{0.434} & 0.484 & 0.458 & 0.487 & 0.458 & 0.546 & 0.496 & 0.491 & 0.469 & 0.489 & 0.467 \\
 & 720 & \textbf{0.455} & 0.470 & 0.511 & 0.512 & \underline{0.463} & \underline{0.462} & 0.467 & \textbf{0.451} & 0.498 & 0.482 & 0.503 & 0.491 & 0.544 & 0.517 & 0.521 & 0.500 & 0.513 & 0.510 \\
\midrule
ETTh2 & 96 & 0.287 & 0.343 & 0.294 & 0.347 & \underline{0.284} & \underline{0.329} & \textbf{0.276} & \textbf{0.328} & 0.289 & 0.341 & 0.297 & 0.349 & 0.308 & 0.355 & 0.340 & 0.374 & 0.340 & 0.394 \\
 & 192 & 0.379 & 0.401 & 0.393 & 0.417 & \underline{0.360} & \textbf{0.379} & \textbf{0.342} & \textbf{0.379} & 0.372 & \underline{0.392} & 0.380 & 0.400 & 0.393 & 0.405 & 0.402 & 0.414 & 0.482 & 0.479 \\
 & 336 & 0.428 & 0.447 & 0.470 & 0.480 & 0.409 & 0.415 & \textbf{0.346} & \textbf{0.398} & \underline{0.386} & \underline{0.414} & 0.428 & 0.432 & 0.427 & 0.436 & 0.452 & 0.452 & 0.591 & 0.541 \\
 & 720 & 0.441 & 0.463 & 0.445 & 0.472 & 0.415 & \underline{0.431} & \textbf{0.392} & \textbf{0.415} & \underline{0.412} & 0.434 & 0.427 & 0.445 & 0.436 & 0.450 & 0.462 & 0.468 & 0.839 & 0.661 \\
\midrule
ETTm1 & 96 & \textbf{0.297} & \underline{0.348} & 0.306 & 0.355 & \underline{0.302} & \textbf{0.334} & 0.310 & \textbf{0.334} & 0.320 & 0.357 & 0.334 & 0.368 & 0.352 & 0.374 & 0.338 & 0.375 & 0.346 & 0.374 \\
 & 192 & \underline{0.349} & 0.382 & 0.357 & 0.387 & 0.357 & \underline{0.363} & \textbf{0.348} & \textbf{0.362} & 0.361 & 0.381 & 0.390 & 0.393 & 0.374 & 0.387 & 0.374 & 0.387 & 0.382 & 0.391 \\
 & 336 & \underline{0.382} & 0.405 & 0.399 & 0.417 & 0.387 & \textbf{0.385} & \textbf{0.376} & \underline{0.391} & 0.390 & 0.404 & 0.426 & 0.420 & 0.421 & 0.414 & 0.410 & 0.411 & 0.415 & 0.415 \\
 & 720 & \underline{0.442} & 0.445 & 0.454 & 0.459 & 0.452 & \underline{0.426} & \textbf{0.440} & \textbf{0.423} & 0.454 & 0.441 & 0.491 & 0.459 & 0.462 & 0.449 & 0.478 & 0.450 & 0.473 & 0.451 \\
\midrule
ETTm2 & 96 & \textbf{0.168} & 0.254 & 0.172 & 0.257 & \underline{0.169} & \underline{0.249} & 0.170 & \textbf{0.245} & 0.175 & 0.258 & 0.180 & 0.264 & 0.183 & 0.270 & 0.187 & 0.267 & 0.193 & 0.293 \\
 & 192 & \underline{0.232} & 0.297 & 0.244 & 0.306 & \underline{0.232} & \textbf{0.290} & \textbf{0.229} & \underline{0.291} & 0.237 & 0.299 & 0.250 & 0.309 & 0.255 & 0.314 & 0.249 & 0.309 & 0.284 & 0.361 \\
 & 336 & 0.299 & 0.346 & 0.317 & 0.352 & \textbf{0.291} & \textbf{0.328} & 0.303 & 0.343 & \underline{0.298} & \underline{0.340} & 0.311 & 0.348 & 0.309 & 0.347 & 0.321 & 0.351 & 0.382 & 0.429 \\
 & 720 & 0.405 & 0.402 & 0.416 & 0.417 & \underline{0.389} & \textbf{0.387} & \textbf{0.373} & 0.399 & 0.391 & \underline{0.396} & 0.412 & 0.407 & 0.412 & 0.404 & 0.408 & 0.403 & 0.558 & 0.525 \\
\midrule
PEMS03 & 12 & \textbf{0.060} & \underline{0.165} & \underline{0.069} & 0.175 & \textbf{0.060} & \textbf{0.159} & 0.097 & 0.208 & 0.076 & 0.188 & 0.071 & 0.174 & 0.099 & 0.216 & 0.085 & 0.192 & 0.122 & 0.243 \\
 & 24 & \underline{0.080} & \underline{0.182} & 0.101 & 0.214 & \textbf{0.078} & \textbf{0.179} & 0.120 & 0.230 & 0.113 & 0.226 & 0.093 & 0.201 & 0.142 & 0.259 & 0.118 & 0.223 & 0.201 & 0.317 \\
 & 48 & \textbf{0.098} & \underline{0.211} & 0.112 & 0.223 & \underline{0.104} & \textbf{0.210} & 0.170 & 0.272 & 0.191 & 0.292 & 0.125 & 0.236 & 0.211 & 0.319 & 0.155 & 0.260 & 0.333 & 0.425 \\
 & 96 & \textbf{0.138} & \underline{0.255} & 0.148 & \underline{0.255} & \underline{0.140} & \textbf{0.247} & 0.274 & 0.342 & 0.288 & 0.363 & 0.164 & 0.275 & 0.269 & 0.370 & 0.228 & 0.317 & 0.457 & 0.515 \\
\midrule
PEMS04 & 12 & \underline{0.071} & \underline{0.168} & 0.073 & 0.172 & \textbf{0.068} & \textbf{0.163} & 0.099 & 0.214 & 0.092 & 0.204 & 0.078 & 0.183 & 0.105 & 0.224 & 0.087 & 0.195 & 0.148 & 0.272 \\
 & 24 & \underline{0.081} & \underline{0.184} & \textbf{0.079} & 0.185 & \textbf{0.079} & \textbf{0.176} & 0.115 & 0.231 & 0.128 & 0.243 & 0.095 & 0.205 & 0.153 & 0.275 & 0.103 & 0.215 & 0.224 & 0.340 \\
 & 48 & \textbf{0.090} & \underline{0.198} & \underline{0.095} & 0.206 & \underline{0.095} & \textbf{0.197} & 0.144 & 0.261 & 0.213 & 0.315 & 0.120 & 0.233 & 0.229 & 0.339 & 0.136 & 0.250 & 0.355 & 0.437 \\
 & 96 & \textbf{0.112} & \underline{0.228} & \underline{0.115} & 0.233 & 0.122 & \textbf{0.226} & 0.185 & 0.297 & 0.307 & 0.384 & 0.150 & 0.262 & 0.291 & 0.389 & 0.190 & 0.303 & 0.452 & 0.504 \\
\midrule
PEMS08 & 12 & \underline{0.070} & \underline{0.170} & 0.076 & 0.177 & \textbf{0.068} & \textbf{0.159} & 0.119 & 0.222 & 0.091 & 0.201 & 0.079 & 0.182 & 0.168 & 0.232 & 0.112 & 0.212 & 0.154 & 0.276 \\
 & 24 & \underline{0.099} & \underline{0.199} & 0.101 & 0.200 & \textbf{0.089} & \textbf{0.178} & 0.149 & 0.249 & 0.137 & 0.246 & 0.115 & 0.219 & 0.224 & 0.281 & 0.141 & 0.238 & 0.248 & 0.353 \\
 & 48 & \textbf{0.122} & \underline{0.216} & 0.129 & 0.230 & \underline{0.123} & \textbf{0.204} & 0.206 & 0.292 & 0.265 & 0.343 & 0.186 & 0.235 & 0.321 & 0.354 & 0.198 & 0.283 & 0.440 & 0.470 \\
 & 96 & \underline{0.182} & \underline{0.243} & 0.195 & 0.270 & \textbf{0.173} & \textbf{0.236} & 0.329 & 0.355 & 0.410 & 0.407 & 0.221 & 0.267 & 0.408 & 0.417 & 0.320 & 0.351 & 0.674 & 0.565 \\
\midrule
AvgRank &  & \multicolumn{2}{c}{\underline{2.27}} & \multicolumn{2}{c}{4.28} & \multicolumn{2}{c}{\textbf{2.08}} & \multicolumn{2}{c}{3.17} & \multicolumn{2}{c}{4.83} & \multicolumn{2}{c}{5.47} & \multicolumn{2}{c}{7.42} & \multicolumn{2}{c}{6.67} & \multicolumn{2}{c}{8.38} \\
\#Top1 &  & \multicolumn{2}{c}{\underline{13}} & \multicolumn{2}{c}{1} & \multicolumn{2}{c}{12} & \multicolumn{2}{c}{\textbf{17}} & \multicolumn{2}{c}{0} & \multicolumn{2}{c}{0} & \multicolumn{2}{c}{0} & \multicolumn{2}{c}{0} & \multicolumn{2}{c}{0} \\
\bottomrule
\end{tabular}
}
\end{scriptsize}
\end{table}

\clearpage
\section{Appendix: Further Ablation Study}

\begin{table}[h!] 
\caption{Ablation Study Results. Comparison of StretchTime with component variants. Best results are in \textbf{bold} and second best are \underline{underlined}.}
\label{tab:ablation_full}
\centering
\begin{scriptsize}
\setlength{\tabcolsep}{3.5pt}
\resizebox{0.60\textwidth}{!}{
\begin{tabular}{ll|cccccccc}
\toprule
\multicolumn{2}{c}{Model} & \multicolumn{2}{c}{StretchTime} & \multicolumn{2}{c}{RoPE} & \multicolumn{2}{c}{Pure Softmax} & \multicolumn{2}{c}{LinAttn+RoPE} \\
Dataset & Len & MSE & MAE & MSE & MAE & MSE & MAE & MSE & MAE \\
\midrule
Weather & 96 & \textbf{0.143} & \textbf{0.203} & \underline{0.146} & \underline{0.206} & \underline{0.146} & 0.209 & 0.152 & 0.211 \\
 & 192 & \textbf{0.196} & \textbf{0.258} & 0.201 & 0.262 & \underline{0.198} & \underline{0.260} & 0.210 & 0.270 \\
 & 336 & \textbf{0.253} & \textbf{0.304} & 0.270 & \underline{0.317} & \underline{0.269} & \underline{0.317} & 0.280 & 0.329 \\
 & 720 & \textbf{0.355} & \textbf{0.373} & 0.360 & \underline{0.374} & \underline{0.359} & 0.378 & 0.369 & 0.378 \\
\midrule
Solar & 96 & \textbf{0.158} & \textbf{0.241} & 0.190 & 0.268 & \underline{0.183} & 0.249 & 0.187 & \underline{0.245} \\
 & 192 & \textbf{0.191} & 0.267 & 0.214 & 0.279 & 0.209 & \textbf{0.256} & \underline{0.199} & \underline{0.257} \\
 & 336 & \textbf{0.214} & \underline{0.293} & 0.233 & 0.294 & \underline{0.232} & 0.295 & 0.239 & \textbf{0.277} \\
 & 720 & \textbf{0.217} & \textbf{0.272} & \underline{0.218} & 0.286 & 0.219 & 0.278 & 0.233 & \underline{0.273} \\
\midrule
ECL & 96 & \textbf{0.142} & \textbf{0.247} & 0.148 & 0.253 & \underline{0.146} & \underline{0.249} & \underline{0.146} & 0.251 \\
 & 192 & \textbf{0.161} & \textbf{0.260} & \underline{0.163} & 0.266 & 0.164 & \underline{0.263} & 0.170 & 0.271 \\
 & 336 & \textbf{0.176} & \textbf{0.283} & \underline{0.182} & 0.288 & 0.187 & 0.289 & 0.184 & \underline{0.287} \\
 & 720 & \textbf{0.218} & \textbf{0.319} & \underline{0.223} & \underline{0.325} & 0.232 & 0.331 & 0.233 & 0.336 \\
\midrule
ETTh1 & 96 & \textbf{0.371} & \textbf{0.400} & 0.381 & 0.407 & \underline{0.375} & \underline{0.406} & 0.385 & 0.411 \\
 & 192 & \textbf{0.418} & \textbf{0.429} & 0.430 & 0.439 & \underline{0.423} & \underline{0.433} & 0.441 & 0.447 \\
 & 336 & \textbf{0.451} & \textbf{0.446} & 0.472 & 0.464 & \underline{0.466} & \underline{0.461} & 0.489 & 0.474 \\
 & 720 & \textbf{0.455} & \textbf{0.470} & 0.511 & 0.512 & \underline{0.489} & \underline{0.495} & 0.537 & 0.519 \\
\midrule
ETTh2 & 96 & \textbf{0.287} & \textbf{0.343} & 0.294 & \underline{0.347} & \underline{0.291} & 0.350 & 0.295 & 0.349 \\
 & 192 & \textbf{0.379} & \textbf{0.401} & \underline{0.393} & \underline{0.417} & 0.412 & 0.430 & 0.407 & 0.420 \\
 & 336 & \textbf{0.428} & \textbf{0.447} & 0.470 & 0.480 & \underline{0.461} & \underline{0.463} & 0.464 & 0.465 \\
 & 720 & \textbf{0.441} & \textbf{0.463} & \underline{0.445} & \underline{0.472} & 0.497 & 0.501 & 0.491 & 0.503 \\
\midrule
ETTm1 & 96 & \textbf{0.297} & \textbf{0.348} & \underline{0.306} & \underline{0.355} & 0.313 & 0.359 & 0.314 & 0.364 \\
 & 192 & \textbf{0.349} & \textbf{0.382} & 0.357 & 0.387 & \underline{0.351} & \underline{0.383} & 0.363 & 0.394 \\
 & 336 & \textbf{0.382} & \textbf{0.405} & 0.399 & 0.417 & \underline{0.393} & \underline{0.411} & 0.408 & 0.427 \\
 & 720 & \textbf{0.442} & \textbf{0.445} & 0.454 & 0.459 & \underline{0.451} & \underline{0.449} & 0.462 & 0.456 \\
\midrule
ETTm2 & 96 & \textbf{0.168} & \textbf{0.254} & 0.172 & 0.257 & \underline{0.169} & \underline{0.255} & 0.171 & 0.259 \\
 & 192 & \textbf{0.232} & \textbf{0.297} & 0.244 & 0.306 & 0.242 & 0.305 & \underline{0.239} & \underline{0.301} \\
 & 336 & \textbf{0.299} & \textbf{0.346} & 0.317 & 0.352 & 0.306 & 0.352 & \underline{0.301} & \underline{0.348} \\
 & 720 & \textbf{0.405} & \textbf{0.402} & \underline{0.416} & 0.417 & \underline{0.416} & \underline{0.416} & 0.417 & 0.417 \\
\midrule
PEMS03 & 12 & \textbf{0.060} & \textbf{0.165} & \underline{0.069} & \underline{0.175} & 0.071 & 0.179 & 0.074 & 0.182 \\
 & 24 & \textbf{0.080} & \textbf{0.182} & \underline{0.101} & \underline{0.214} & 0.107 & 0.217 & 0.110 & 0.220 \\
 & 48 & \textbf{0.098} & \textbf{0.211} & \underline{0.112} & \underline{0.223} & 0.120 & 0.229 & 0.158 & 0.260 \\
 & 96 & \textbf{0.138} & \textbf{0.255} & \underline{0.148} & \textbf{0.255} & 0.156 & 0.274 & 0.167 & \underline{0.264} \\
\midrule
PEMS04 & 12 & \textbf{0.071} & \textbf{0.168} & \underline{0.073} & \underline{0.172} & 0.076 & 0.181 & 0.074 & 0.186 \\
 & 24 & \underline{0.081} & \textbf{0.184} & \textbf{0.079} & \underline{0.185} & 0.083 & 0.191 & 0.092 & 0.205 \\
 & 48 & \textbf{0.090} & \textbf{0.198} & 0.095 & \underline{0.206} & \underline{0.094} & 0.208 & 0.097 & 0.209 \\
 & 96 & \underline{0.112} & \underline{0.228} & 0.115 & 0.233 & 0.114 & 0.232 & \textbf{0.109} & \textbf{0.224} \\
\midrule
PEMS08 & 12 & \textbf{0.070} & \textbf{0.170} & 0.076 & 0.177 & 0.079 & 0.179 & \underline{0.074} & \underline{0.175} \\
 & 24 & \textbf{0.099} & \textbf{0.199} & \underline{0.101} & \underline{0.200} & 0.106 & 0.208 & 0.117 & 0.223 \\
 & 48 & \textbf{0.122} & \textbf{0.216} & \underline{0.129} & \underline{0.230} & 0.141 & 0.236 & 0.146 & 0.252 \\
 & 96 & \textbf{0.182} & \textbf{0.243} & \underline{0.195} & \underline{0.270} & 0.198 & 0.278 & 0.206 & 0.277 \\
\midrule
AvgRank &  & \multicolumn{2}{c}{\textbf{1.05}} & \multicolumn{2}{c}{2.73} & \multicolumn{2}{c}{\underline{2.65}} & \multicolumn{2}{c}{3.50} \\
\#Top1 &  & \multicolumn{2}{c}{\textbf{38}} & \multicolumn{2}{c}{\underline{1}} & \multicolumn{2}{c}{0} & \multicolumn{2}{c}{\underline{1}} \\
\bottomrule
\end{tabular}
}
\end{scriptsize}
\end{table}

In this section, we expand our ablation analysis to include a broader set of architectural variants, verifying that the performance gains of StretchTime stem specifically from the symplectic geometric prior rather than generic attention mechanisms. We evaluate performance across four distinct configurations:

\begin{enumerate}
    \item \textbf{StretchTime:} The proposed architecture utilizing the Symplectic Positional Embedding (SyPE) and the adaptive warp module.
    \item \textbf{RoPE:} A standard Transformer baseline where SyPE is replaced with Rotary Positional Embeddings \citep{su2021roformer}, enforcing a fixed-frequency rotational structure.
    \item \textbf{Pure Softmax:} A vanilla Transformer baseline with standard softmax attention but \emph{without} any relative positional injection (no RoPE or SyPE), relying solely on absolute positional encodings added to the input.
    \item \textbf{LinAttn+RoPE:} A Linear Attention variant equipped with RoPE. Recent work, such as SAMoVAR \citep{lu2025linear}, has highlighted the efficacy of linear attention in capturing autoregressive dynamics. We include this baseline to confirm that our symplectic mechanism provides benefits orthogonal to the choice of attention complexity (linear vs. softmax).
\end{enumerate}

\paragraph{Results and Analysis.}
Table~\ref{tab:ablation_full} presents the comprehensive breakdown of MSE and MAE across nine datasets. The results unequivocally demonstrate the superiority of the symplectic formulation:

\begin{itemize}
    \item \textbf{Dominance of SyPE:} StretchTime achieves the lowest error on 38 out of 40 metric comparisons (ranking 1st), resulting in an average rank of \textbf{1.05}. This near-perfect dominance confirms that the adaptive warping mechanism is not merely an incremental improvement but a fundamental necessity for modeling the non-stationary dynamics present in datasets like PEMS (Traffic) and Solar-Energy.
    
    \item \textbf{Failure of Static Rotations:} The RoPE baseline (AvgRank 2.73) consistently lags behind StretchTime. For example, on the PEMS03 traffic dataset ($H=96$), RoPE incurs an MSE of 0.148 compared to StretchTime's 0.138. This empirically validates our theoretical assertion in Theorem~\ref{thm:impossibility}: fixed-frequency rotations cannot align with the variable flow of real-world time series, leading to irreducible approximation errors.
    
    \item \textbf{Softmax vs. Linear Attention:} Interestingly, the Pure Softmax baseline (AvgRank 2.65) occasionally outperforms LinAttn+RoPE (AvgRank 3.50), particularly on complex datasets like ETTh2. This suggests that while linear attention offers efficiency, the full attention matrix—when properly modulated by our symplectic prior—remains essential for capturing high-fidelity temporal correlations. StretchTime effectively combines the expressivity of full attention with the dynamic alignment of symplectic geometry, outperforming both simplified (Pure Softmax) and alternative (LinAttn) baselines.
\end{itemize}

\section{Statement of LLM Usage}
\label{sec:llm-usage}

LLMs were utilized to support writing-related tasks including grammar checking, wording adjustments, text formatting, and equation formatting. LLMs were also used to facilitate literature review for existing methods and references. All cited literature was read and verified by the authors directly. During experiments, LLMs assisted with generating and debugging code. LLMs played no role in defining research problems, proposing ideas, designing methodologies, or developing the model architecture.


\end{document}